\newtheorem{theorem}{Theorem}
\newtheorem{lemma}{Lemma}
\title{Beyond Diagonal Covariance: Flexible Posterior VAEs via Free-Form Injective Flows}
\author{%
  Peter Sorrenson \\
  Computer Vision and Learning Lab\\
  Heidelberg University, Germany\\
  \texttt{peter.sorrenson@gmail.com} \\
  \And
  Lukas Lührs \\
  Computer Vision and Learning Lab \\
  Heidelberg University, Germany \\
  \AND
  Hans Olischläger \\
  Computer Vision and Learning Lab \\
  Heidelberg University, Germany \\
  \And
  Ullrich Köthe \\
  Computer Vision and Learning Lab \\
  Heidelberg University, Germany \\
}
\begin{document}

\maketitle

\begin{abstract}
    Variational Autoencoders (VAEs) are powerful generative models widely used for learning interpretable latent spaces, quantifying uncertainty, and compressing data for downstream generative tasks. VAEs typically rely on diagonal Gaussian posteriors due to computational constraints. Using arguments grounded in differential geometry, we demonstrate inherent limitations in the representational capacity of diagonal covariance VAEs, as illustrated by explicit low-dimensional examples.
    In response, we show that a regularized variant of the recently introduced Free-form Injective Flow (FIF) can be interpreted as a VAE featuring a highly flexible, implicitly defined posterior. Crucially, this regularization yields a posterior equivalent to a full Gaussian covariance distribution, yet maintains computational costs comparable to standard diagonal covariance VAEs.
    Experiments on image datasets validate our approach, demonstrating that incorporating full covariance substantially improves model likelihood.
\end{abstract}

\section{Introduction}
\label{sec:intro}

Deep generative models have reshaped the way we represent, compress, and create high–dimensional data such as images, audio, or text.  
Among them, \emph{Variational Autoencoders} (VAEs) \citep{kingma2014auto,rezende2014stochastic} stand out for their elegant combination of amortized inference and latent-variable modeling, enabling smooth, semantically meaningful latent spaces and principled uncertainty estimates.  
Yet, in practical VAE implementations the approximate posterior is almost invariably chosen to be a diagonal Gaussian.  
This design choice is driven by the simplicity of the reparameterization trick and a closed-form Kullback-Leibler (KL) term, but it severely limits the posterior’s capacity to capture correlations across latent dimensions and thus constrains the kinds of data manifolds the model can learn.  

\paragraph{Beyond diagonal Gaussians.}
Several research lines have tried to lift the diagonal constraint.  
Hierarchical VAEs \citep{sonderby2016ladder,maaloe2019biva} and sophisticated priors such as VampPrior \citep{tomczak2018vae} enrich the latent structure enrich the latent structure while still using diagonal Gaussians at the level of each latent block.
Normalizing-flow posteriors \citep{rezende2015variational,kingma2016improved} provide a strictly more expressive family at the price of stacking many flow steps whose Jacobians must be triangular or specially constructed for tractability.  
Full-covariance Gaussians can be learned directly via Cholesky factors \citep{kingma2019introduction} or by Householder transformations \citep{tomczak2016improving}, yet both scale quadratically in latent dimension.
The Variational Laplace Autoencoder (VLAE) \citep{park2019variational} solves an optimization problem to estimate the mean and (potentially non-diagonal) covariance of the posterior, but involves expensive Jacobian evaluation and matrix inversion which does not scale well with latent dimension.

\paragraph{Injective flows as manifold learners.}
A complementary line of work adapts exact likelihood normalizing flows \citep{rezende2015variational,kobyzev2021normalizing,papamakarios2021normalizing} to learn a manifold embedded in the data space, trading a bijectivity constraint for injectivity.  
Such injective normalizing flows simultaneously learn and maximize likelihood on a manifold by jointly optimizing an encoder-decoder pair together with the change-of-variables term \citep{brehmer2020flows,caterini2021rectangular}.

Very recently, \emph{Free-form Injective Flows} (FIFs) \citep{sorrenson2024lifting,draxler2024freeform} removed the architectural constraints of injective normalizing flows by introducing an efficient Hutchinson-style estimator for the gradient of the log-determinant.  
FIFs learn both manifold and density simultaneously, but they provide no direct latent-variable interpretation and no tractable posterior.

\paragraph{This paper.}
We observe that, under a mild Laplace approximation, the geometry of FIFs naturally induces a \emph{full-covariance} Gaussian posterior whose covariance is given implicitly through encoder and decoder Jacobians.  
Building on this insight, we introduce FIVE, a \textbf{F}ree-form \textbf{I}njective-Flow-based \textbf{V}ariational Auto\textbf{E}ncoder that:

\begin{enumerate}
    \item Leverages FIFs to furnish VAEs with a highly flexible posterior that is \emph{computationally only slightly more expensive} than the diagonal case.
    \item Provides a differential-geometric analysis revealing when and why diagonal posteriors fail to represent curved data manifolds.
    \item Proposes a curvature-aware regularization that stabilizes training and yields an approximate KL term without computing determinants explicitly.
    \item Matches or surpasses the log-likelihood of both VAEs and standalone FIFs on benchmark image datasets.
\end{enumerate}

Taken together, our results indicate that moving \emph{beyond} diagonal covariance is not only theoretically well-motivated but also practically attainable at scale.  We hope that FIVE will serve as a drop-in replacement for standard VAEs when richer posteriors estimates are desired.

\section{Variational Autoencoders and their limitations}
\label{sec:vae}

Variational Autoencoders (VAEs) are latent-variable models characterized by a prior distribution over latent variables $p(z)$ (typically standard normal) and a conditionally normal distribution over data points given latent variables of the form:
\begin{equation}
    p(x|z) = \mathcal{N}(x; g(z), \sigma^2 I),
\end{equation}
where $g(z)$ is a neural network decoder, and $\sigma^2$ is a scalar controlling the variance of the likelihood.

The marginal distribution $p(x)$ is obtained by integrating over latent variables:
\begin{equation}
    p(x) = \int p(x|z)p(z)\, dz.
\end{equation}
This integral is typically intractable, making direct optimization of the KL divergence $D_\text{KL}(q(x)\|p(x))$ between the training distribution $q(x)$ and the model distribution $p(x)$ difficult.

\subsection{Introducing the variational distribution}

To circumvent the computational difficulty, VAEs introduce an auxiliary distribution, called the variational posterior, $q(z|x)$. By incorporating this distribution, we have two distributions defined over the joint latent-data space $(x, z)$:
\begin{align}
    p(x,z) &= p(z)p(x|z), \\
    q(x,z) &= q(x)q(z|x).
\end{align}

Instead of minimizing $D_\text{KL}(q(x)\|p(x))$ directly, we minimize the divergence between these two joint distributions:
\begin{equation}
\label{eq:kl-joint-x-z}
    D_\text{KL}(q(x,z)\|p(x,z)) = D_\text{KL}(q(x)\|p(x)) + \mathbb{E}_{q(x)}\left[D_\text{KL}(q(z|x)\|p(z|x))\right].
\end{equation}

This divergence provides an upper bound on $D_\text{KL}(q(x)\|p(x))$ due to the non-negativity of KL divergence. Explicitly, we have:
\begin{equation}
    D_\text{KL}(q(x)\|p(x)) \leq D_\text{KL}(q(x,z)\|p(x,z)),
\end{equation}
with the gap precisely given by:
\begin{equation}
    \mathbb{E}_{q(x)}[D_\text{KL}(q(z|x)\|p(z|x))].
\end{equation}

\subsection{Evidence Lower Bound (ELBO)}

Typically, VAEs assume that the variational distribution $q(z|x)$ is a diagonal Gaussian:
\begin{equation}
    q(z|x) = \mathcal{N}(z; \mu(x), \text{diag}(\sigma^2(x))),
\end{equation}
for computational tractability. If we discard the unknown entropy of $q(x)$, we can rewrite \cref{eq:kl-joint-x-z} as the well-known Evidence Lower Bound (ELBO):
\begin{equation}
    \text{ELBO}(x) = \mathbb{E}_{q(z|x)}[\log p(x|z)] - D_{\text{KL}}(q(z|x)\|p(z)).
\end{equation}
Since typically $p(z)$ is the standard normal, we can evaluate the KL-divergence via the analytical expression:
\begin{equation}
    D_{\text{KL}}(\mathcal{N}(\mu, \Sigma)\|\mathcal{N}(0, I_d)) = \frac{1}{2} \left( \|\mu\|^2 + \operatorname{tr}(\Sigma) - d - \log\det(\Sigma) \right)
\end{equation}
which is easy to evaluate for diagonal $\Sigma$.

Optimization then becomes a maximization of the ELBO, here equivalently written as a minimization of the negative ELBO:
\begin{equation}
    \mathcal{L}(x) = \mathbb{E}_{q(z|x)}[-\log p(x|z)] + D_{\text{KL}}(q(z|x)\|p(z)).
\end{equation}

\subsection{Form of the posterior and its implications}

Since $q(z|x)$ is assumed diagonal Gaussian, the optimal posterior $p(z|x)$ which minimizes $D_\text{KL}(q(z|x)\|p(z|x))$ will also be approximately diagonal Gaussian. This assumption significantly constrains the decoder function $g(z)$.

Specifically, for a Gaussian posterior approximation, the following theorem applies:

\begin{theorem}
    When optimizing the above described latent variable model (standard VAE) with diagonal Gaussian $q(z|x)$ and injective decoder, the optimal decoder $g$ has an approximately diagonal pull-back metric, i.e., $g'(z)^\top g'(z)$ is approximately diagonal for all $z$, where $g'(z) = \frac{\partial g(z)}{\partial z}$ is the Jacobian of the decoder.
\end{theorem}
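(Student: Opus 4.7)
The plan is to locally approximate the true posterior $p(z|x)$ by a Gaussian using Laplace's method, observe that its inverse covariance is essentially the pull-back metric $g'(z)^\top g'(z)$ plus a small prior contribution, and then argue that restricting the variational family $q(z|x)$ to diagonal Gaussians forces this local covariance — and hence the pull-back metric — to be approximately diagonal at the optimum.

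First, I would write Bayes' rule with $p(z) = \mathcal{N}(0, I)$ and $p(x|z) = \mathcal{N}(x; g(z), \sigma^2 I)$ as
\[ -\log p(z|x) = \tfrac{1}{2}\|z\|^2 + \tfrac{1}{2\sigma^2}\|x - g(z)\|^2 + c(x), \]
and identify the posterior mode $z^*(x)$. For injective $g$ and $x$ close to the learned manifold (as enforced by the reconstruction term), $z^*$ satisfies $g(z^*) \approx x$. Differentiating twice in $z$ gives
\[ H(z^*) = I + \tfrac{1}{\sigma^2}\, g'(z^*)^\top g'(z^*) - \tfrac{1}{\sigma^2} \sum_i \bigl(x_i - g_i(z^*)\bigr)\, \nabla^2 g_i(z^*), \]
and the residual term is negligible once the reconstruction is good, leaving $H(z^*) \approx I + \sigma^{-2}\, g'(z^*)^\top g'(z^*)$. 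Laplace's method then yields $p(z|x) \approx \mathcal{N}\bigl(z^*,\, H(z^*)^{-1}\bigr)$ locally.

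Next, I would recall from \cref{eq:kl-joint-x-z} that the gap between the negative ELBO and $-\log p(x)$ is $\mathbb{E}_{q(x)}[D_{\text{KL}}(q(z|x)\|p(z|x))]$. At the optimum this term is driven as small as the diagonal family allows; since $q(z|x)$ is constrained to be diagonal Gaussian, the true posterior must itself be close to a diagonal Gaussian in the KL sense. Under the Laplace approximation this forces $H(z^*)^{-1}$ — equivalently $H(z^*)$ — to be approximately diagonal, and because the $I$ term is already diagonal, so must $g'(z^*)^\top g'(z^*)$ be. Letting $x$ range over the data manifold and using injectivity of $g$, this yields approximate diagonality of $g'(z)^\top g'(z)$ for all $z$ in the support of the learned latent density.

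The main obstacle is making \emph{approximately} quantitative. The Laplace approximation is tight only when the posterior is concentrated, typically in the small-$\sigma$ or low-curvature regime, and one must argue that higher-order terms in the Taylor expansion of $-\log p(z|x)$ do not absorb the off-diagonal structure. I would handle this by assuming $g \in C^3$ with bounded third derivatives and working in the small-$\sigma$ limit, where the Laplace expansion becomes sharp; the KL gap can then be expanded to leading order in $\sigma$ and the off-diagonal entries of $g'(z)^\top g'(z)$ appear as an unavoidable positive penalty that vanishes only when those entries do. A secondary subtlety is arguing that the encoder converges to the map $x \mapsto z^*(x)$ and the per-coordinate variances to the diagonal of $H(z^*)^{-1}$ at the optimum, which again holds in the same regime.
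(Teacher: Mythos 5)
Your proposal is correct and follows essentially the same route as the paper: Laplace-approximate $p(z\mid x)$ around the posterior mode with $H \approx I + \sigma^{-2} g'^\top g'$, invoke the joint-KL decomposition to argue that the optimum forces $p(z\mid x)$ close to the diagonal variational family, conclude that $H$ and hence $g'^\top g'$ must be approximately diagonal, and extend to all $z$ via injectivity. The one place where the paper is more explicit than you are is the final KL step: rather than asserting that closeness to a diagonal Gaussian ``forces'' diagonality of $H^{-1}$, the paper optimizes $\mu$ and the diagonal $\Sigma$ in closed form ($\mu = f(x)$, $s_k = (H^{-1})_{kk}$) and shows the residual KL is $\tfrac12[\operatorname{tr}(R^{-1}) + \log\det R - d]$ with $R$ the correlation matrix of $H^{-1}$, which vanishes iff $R = I$; this makes concrete exactly which off-diagonal quantity is being penalized. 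Your justification for dropping the $\nabla^2 g$ term (the residual $x - g(z^*)$ is $O(\sigma)$, so that term is $O(\sigma^{-1})$ versus the dominant $O(\sigma^{-2})$ contribution) is actually a cleaner account than the paper's, which simply discards higher-order terms with an informal appeal to the optimum suppressing curvature.
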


The full proof is in the appendix.

\textit{Proof sketch:}
If the loss is minimal, then due to the decomposition in \cref{eq:kl-joint-x-z}, $D_\text{KL}(q(z|x)\|p(z|x)) = 0$. This implies that $p(z|x) = q(z|x)$ and hence $p(z|x)$ is diagonal Gaussian. Strictly speaking, the requirement that $p(z|x)$ is Gaussian implies that $g$ is linear. In practice, $g$ typically needs to be nonlinear due to the structure of the data, but we will find that $g$ needs to ``look linear'' on the scale of $\sigma$. This is considered in more detail in the appendix, where we show that second and higher-order derivatives of $g$ must be small.

To find the form of the Gaussian approximation to $p(z|x)$, we approximate $-\log p(x,z)$ as a quadratic in $z$. This leads us to the following solution, where we only keep first order derivatives of $g$, due to the argument that higher-order derivatives must be small. Let $f(x) = \arg\min_z -\log p(x,z)$. Then
\begin{equation}
    p(z|x) \approx \mathcal{N}\left(z; \mu=f(x), \Sigma=\left(I + \frac{1}{\sigma^2}g'(f(x))^\top g'(f(x))\right)^{-1}\right)
\end{equation}

Since $p(z|x) \approx q(z|x)$ and $q(z|x)$ has diagonal covariance, this implies that 
\begin{equation}
    \Sigma=\left(I + \frac{1}{\sigma^2}g'(f(x))^\top g'(f(x))\right)^{-1}    
\end{equation}
is approximately diagonal, which can only be the case if $g'(f(x))^\top g'(f(x))$ is approximately diagonal. The decoder is injective (by assumption), leading to surjective $f$. Hence, the statement that $g'(z)^\top g'(z)$ is approximately diagonal holds for all $z$.

This requirement implies severe restrictions on the decoder, which are explored in the next section.

\subsection{Existence of orthogonal coordinates}

The above results tell us that standard VAEs can only represent functions $g$ such that $g'(z)^\top g'(z)$ is approximately diagonal. Obviously, we can construct functions such that this is not the case, but perhaps it is possible to reparameterize the function such that the Jacobian in the new coordinates has the desired property. To do this, define new coordinates $u = \varphi(z)$ and a new function $h = g \circ \varphi^{-1}$. Clearly $h(u) = g(z)$ and therefore $h$ and $g$ have the same image. In other words, $h$ spans the same manifold in data space as $g$. In addition, $h$ has a different Jacobian: $h'(u) = g'(z) \varphi^{-1\prime}(u)$ and we can search for $h$ such that $h'(u)^\top h'(u)$ is diagonal.

In differential geometry, objects of the form $g'(z)^\top g'(z)$ are known to be metric tensors, in particular, $G = g'(z)^\top g'(z)$ is called the pull-back metric of $g$, which transfers the geometry of the manifold spanned by $g$ into the $z$-space. Our problem can then be framed in the language of differential geometry as the search for an orthogonal coordinate system, given metric tensor $G$. It is well known that it is always possible to find orthogonal (and indeed conformal) local coordinates for any 2-dimensional metric (existence of isothermal coordinates). However, in 3 dimensions and higher, there can be obstructions to forming local orthogonal coordinate systems. 

It is informative to first consider how we form such systems in 2 dimensions. In this case, finding a coordinate system is equivalent to partitioning the space into level sets of two curves $u^1(z)$ and $u^2(z)$. The basis vectors in the tangent space are the gradients, e.g., $e_1 = \nabla_z u^1(z)$. For the metric to be diagonal in these coordinates, we require that cross terms like $G(e_1, e_2)$ vanish, meaning that $e_1$ and $e_2$ should be the eigenvectors of $G$. This in turn means that the level sets, e.g., $u^1(z) = 0$, need to be perpendicular to the eigenvectors of $G$. In 2 dimensions this is always possible locally: we are finding a curve perpendicular to a smoothly varying vector field (e.g., $e_1$), which is satisfied by the flow lines of the perpendicular field given by the other eigenvector. See \cref{fig:2d_coord_systems} for an illustration.

\newlength{\figheight}
\setlength{\figheight}{6cm}  

\begin{figure}[t]
  \centering
  \begin{subfigure}[b]{0.3\textwidth}
    \centering
    \includegraphics[height=\figheight,keepaspectratio]{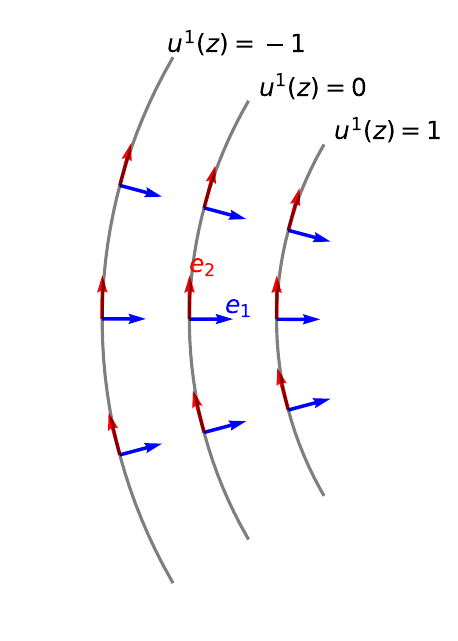}
    \caption{Level sets of $u^1$, perpendicular to $e_1$, found by integrating $e_2$.}
  \end{subfigure}%
  \hfill
  \begin{subfigure}[b]{0.3\textwidth}
    \centering
    \includegraphics[height=\figheight,keepaspectratio]{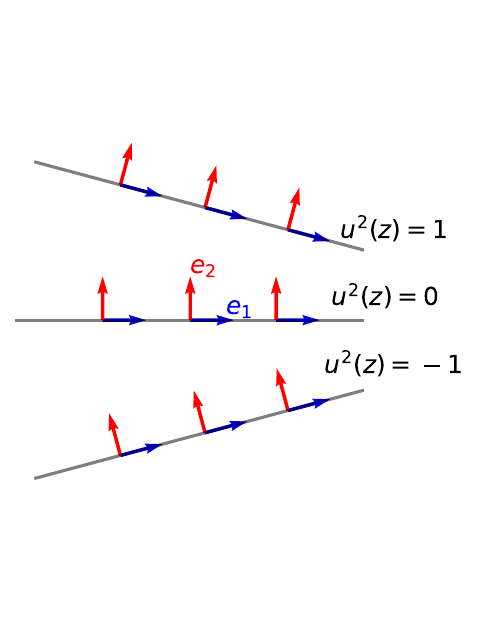}
    \caption{Level sets of $u^2$, perpendicular to $e_2$, found by integrating $e_1$.}
  \end{subfigure}%
  \hfill
  \begin{subfigure}[b]{0.3\textwidth}
    \centering
    \includegraphics[height=\figheight,keepaspectratio]{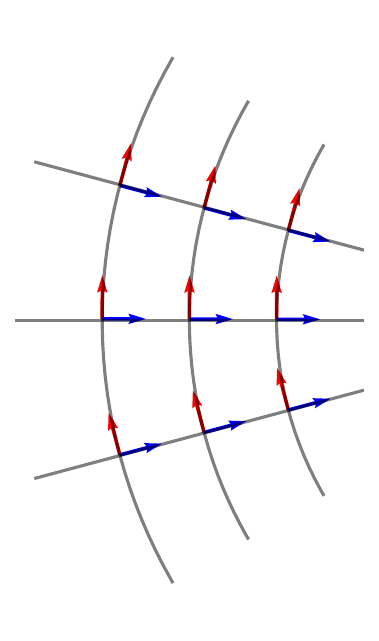}
    \caption{The resulting orthogonal coordinate system.}
  \end{subfigure}
  \caption{Construction of a local orthogonal coordinate system in 2 dimensions from eigenvectors $e_1$ and $e_2$ of the metric tensor $G$.}
  \label{fig:2d_coord_systems}
\end{figure}

In 3 dimensions the situation is more complicated. We need to follow essentially the same procedure: i.e., form surfaces perpendicular to the eigenvectors of $G$. If this is possible, it will be satisfied by the integral curves of the other two eigenvectors, i.e., the surface formed by stitching together the infinitesimal planes spanned by these vectors. But we can only do this if these planes all line up: if the planes twist around each other in an uncoordinated way, there is no consistent way to form the surface without tearing. The condition that these planes line up and hence that they can be integrated into a surface, is given by the Frobenius integrability theorem \citep[Ch. 19]{lee2013manifolds} and requires a property known as involutivity. This says that if you first integrate a short distance along $e_1$ and then along $e_2$, you should end up in the same place as integrating in the reverse order, up to a difference that lies in the plane spanned by $e_1$ and $e_2$ (see \cref{fig:involutivity}). Only if every pair of eigenvectors is involutive, can we form a coordinate system from them. Since using the eigenvectors of $G$ is the only way to form an orthogonal coordinate system, without involutivity, it is not possible to form orthogonal coordinates.

\begin{figure}[t]
  \centering
  \begin{subfigure}[b]{0.5\textwidth}
    \centering
    \includegraphics[width=\linewidth, trim=0cm 3cm 0cm 0cm, clip]{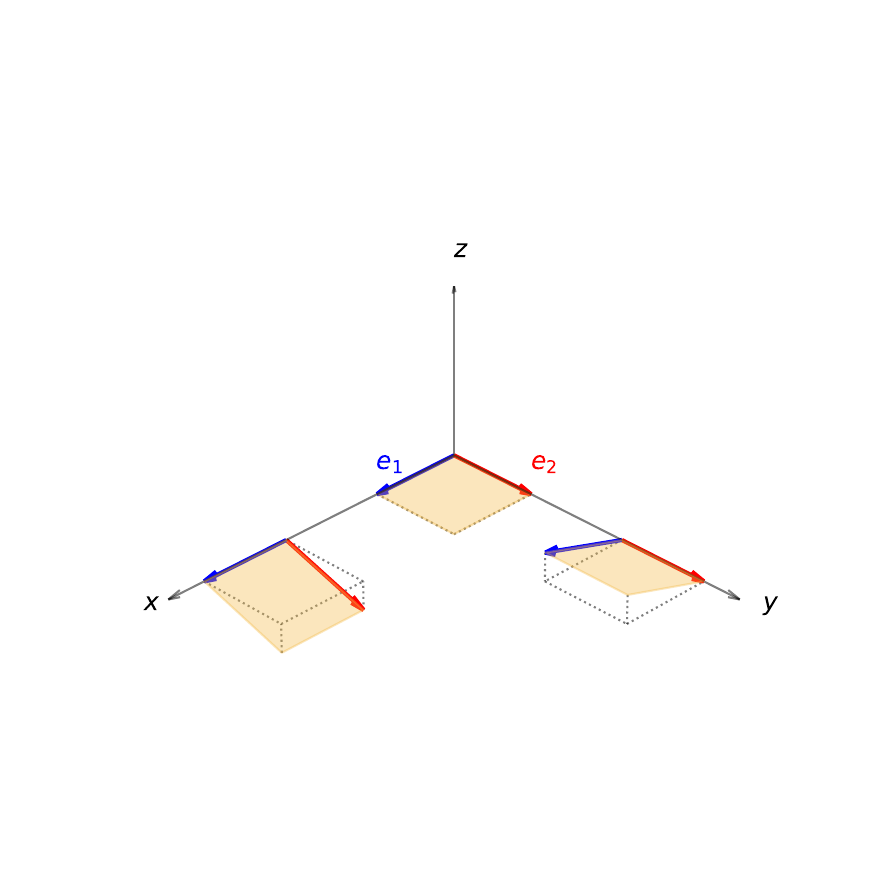}
  \end{subfigure}%
  \hfill
  \begin{subfigure}[b]{0.5\textwidth}
    \centering
    \includegraphics[width=\linewidth, trim=0cm 3cm 0cm 0cm, clip]{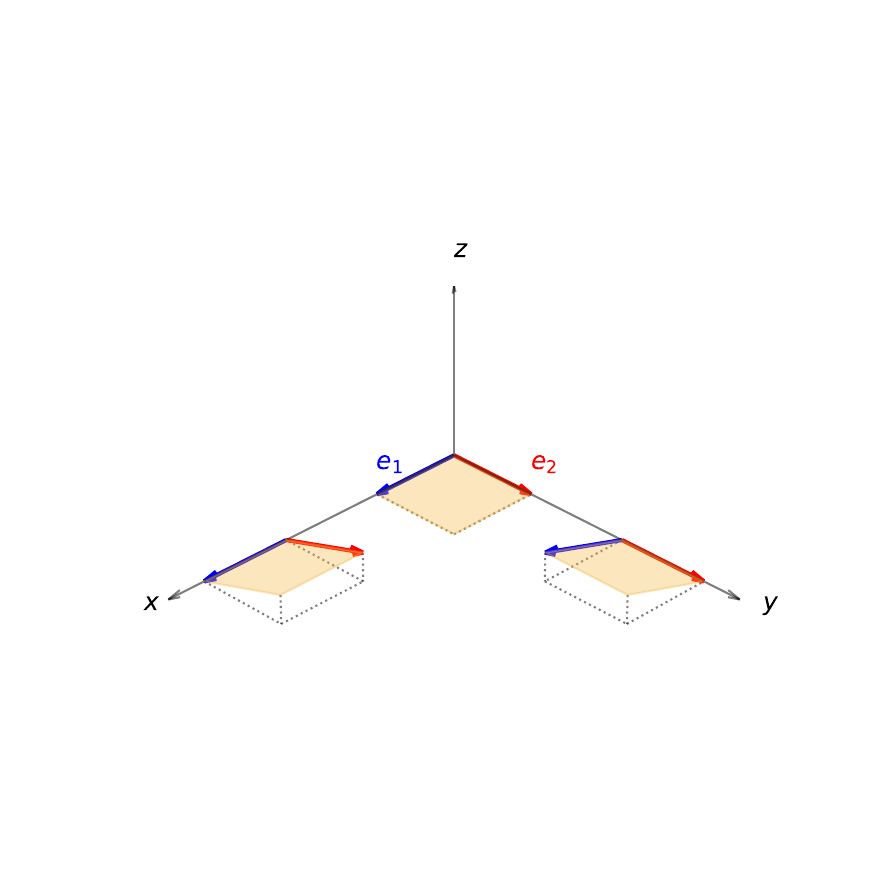}
  \end{subfigure}%
  \caption{Left: Illustration of non-involutive eigen-lines. The planes spanned by $e_1$ and $e_2$ cannot be integrated together into a consistent surface (the surface formed by extending the right-hand plane in the $e_1$ direction lies above that formed by extending the left-hand plane in the $e_2$ direction). Right: The planes can be integrated to form the surface $z=xy$, hence they are involutive.}
  \label{fig:involutivity}
\end{figure}

This is stated formally in the following theorem.

\begin{theorem}[Local existence of orthogonal coordinates]
Let
\begin{equation}
  g : U \subset \mathcal{Z} \to \mathcal{X}
\end{equation}
be a smooth immersion on a connected open set $U$ (so $\operatorname{rank} g'(z) = d$ everywhere).
Set
\begin{equation}
  G = J_{g}^{\top} J_{g},
\end{equation}
the induced positive–definite metric on $U$.
Assume that every eigenvalue of $G$ is simple and varies smoothly, so one may choose a smooth orthonormal eigen-frame $\{e_{1},\dots,e_{d}\}$ with $G(e_{i},e_{i}) = \lambda_{i} > 0$.
For each $i$ define the rank-$1$ eigen-line field $\mathcal{L}_{i} = \operatorname{span}\{e_{i}\} \subset TU$.

The following statements are equivalent:
\begin{enumerate}
  \item \textbf{Orthogonal coordinates exist.}\;
        For every point $p \in U$ there is a neighborhood $V$ around $p$ and a diffeomorphism
        $\varphi : V \to W \subset \mathcal{X}$,
        $\;u = (u^{1},\dots,u^{d}) = \varphi(z)$, such that for
        $h := g \circ \varphi^{-1}$ the pull-back metric is diagonal:
        \begin{equation}
          h^{*}\langle\cdot,\cdot\rangle
          \;=\;
          \sum_{i=1}^{d} H_{i}^{2}(u)\,(du^{i})^{2},
          \qquad H_{i} > 0 \text{ smooth}.
        \end{equation}

  \item \textbf{Eigen-line fields are involutive.}\;
        For all $i \neq j$ one has
        \begin{equation}
          [e_{i},e_{j}] \;\in\; \operatorname{span}\{e_{i},e_{j}\}.
        \end{equation}
\end{enumerate}
\end{theorem}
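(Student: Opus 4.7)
The plan is to prove the equivalence through a direct application of the Frobenius integrability theorem to the family of corank-$1$ orthogonal distributions built from the eigen-frame. The direction $(1) \Rightarrow (2)$ is a short computation; $(2) \Rightarrow (1)$ contains the actual content and is where I would apply Frobenius once per coordinate.

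For $(1) \Rightarrow (2)$, suppose the orthogonal coordinates $u$ exist. Then $G(\partial_{u^{i}}, \partial_{u^{j}}) = H_{i}^{2}\delta_{ij}$ shows that the coordinate fields are $G$-orthogonal, so each $\partial_{u^{i}}$ is an eigenvector of $G$. Simplicity of the eigenvalues then forces $e_{i}$ to be a smooth scalar multiple of $\partial_{u^{i}}$. Combining $[\partial_{u^{i}}, \partial_{u^{j}}] = 0$ with the derivation rule $[fX, gY] = fg[X,Y] + f(Xg)Y - g(Yf)X$ yields $[e_{i}, e_{j}] \in \operatorname{span}\{\partial_{u^{i}}, \partial_{u^{j}}\} = \operatorname{span}\{e_{i}, e_{j}\}$, which is involutivity.

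For $(2) \Rightarrow (1)$, the strategy is to build one coordinate $u^{i}$ per eigen-direction by integrating the orthogonal complement. For each $i$ define the corank-$1$ distribution $\mathcal{D}_{i} = \operatorname{span}\{e_{j} : j \neq i\}$. Pairwise involutivity of the eigen-lines gives $[e_{j}, e_{k}] \in \operatorname{span}\{e_{j}, e_{k}\} \subseteq \mathcal{D}_{i}$ for all $j,k \neq i$, so $\mathcal{D}_{i}$ is involutive. By Frobenius, around any $p \in U$ there is a smooth submersion $u^{i} \colon V \to \mathbb{R}$ whose fibers are the leaves of $\mathcal{D}_{i}$; in particular $du^{i}$ annihilates $\mathcal{D}_{i}$ and is therefore a nonzero multiple of the dual covector $e^{i}$ characterized by $e^{i}(e_{j}) = \delta^{i}_{j}$. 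Since the $e^{i}$ form a coframe, the $du^{i}$ are linearly independent, so $\varphi := (u^{1},\dots,u^{d})$ is a local diffeomorphism.

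Diagonality of the pull-back metric then follows automatically. The coordinate field $\partial_{u^{i}}$ satisfies $du^{j}(\partial_{u^{i}}) = \delta^{j}_{i}$ and so lies in $\bigcap_{j \neq i} \mathcal{D}_{j} = \operatorname{span}\{e_{i}\}$, forcing $\partial_{u^{i}} = H_{i} e_{i}$ for a smooth $H_{i} \neq 0$. Since $G(e_{i}, e_{j}) = \lambda_{i}\delta_{ij}$, this yields $G(\partial_{u^{i}}, \partial_{u^{j}}) = H_{i}^{2}\lambda_{i}\delta_{ij}$, matching the claimed diagonal form after absorbing $\sqrt{\lambda_{i}}$ into $H_{i}$. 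The main obstacle is the $(2) \Rightarrow (1)$ direction; within it, the delicate point is recognizing that \emph{pairwise} involutivity of the rank-$1$ eigen-lines already suffices to integrate each corank-$1$ orthogonal distribution, and that simplicity of the eigenvalues is essential both for a smooth choice of eigen-frame and for pinning down $\partial_{u^{i}} \propto e_{i}$ uniquely.
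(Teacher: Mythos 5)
Your proposal follows the same route as the paper's: a direct bracket computation for $(1)\Rightarrow(2)$ and an application of the Frobenius theorem to the corank-one orthogonal distributions for $(2)\Rightarrow(1)$. In the converse direction you are in fact slightly more careful than the paper, since you explicitly verify that pairwise involutivity of the eigen-lines yields involutivity of each $\mathcal{D}_i = \operatorname{span}\{e_j : j\neq i\}$ before invoking Frobenius, and your observation that $\partial_{u^i} \in \bigcap_{j\neq i}\mathcal{D}_j = \operatorname{span}\{e_i\}$ is a clean way to establish diagonality of the pullback metric.

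However, your $(1)\Rightarrow(2)$ argument contains the same unjustified step as the paper's. You write ``the coordinate fields are $G$-orthogonal, so each $\partial_{u^i}$ is an eigenvector of $G$,'' but this inference does not hold: the eigenframe $\{e_i\}$ is characterized by being \emph{simultaneously} $G$-orthogonal and Euclidean-orthonormal, while condition (1) only guarantees $G$-orthogonality of the $\partial_{u^i}$. As a concrete illustration, let $G = \operatorname{diag}(1,4)$ be constant on $\mathbb{R}^2$ and take the chart $\varphi^{-1}(u^1,u^2) = (u^1 + 4u^2,\; u^1 - u^2)$. Its coordinate fields $\partial_{u^1} = (1,1)$ and $\partial_{u^2} = (4,-1)$ are $G$-orthogonal and commute, giving the diagonal pullback metric $5\,(du^1)^2 + 20\,(du^2)^2$, yet neither is an eigenvector of $G$. (In two dimensions both sides of the equivalence hold trivially, so this is a counterexample to the step, not to the theorem.) The paper's line ``from the diagonal metric we obtain $e_i = H_i^{-1}\partial_i$'' has the identical gap; a correct proof of $(1)\Rightarrow(2)$ would need to argue that the mere existence of \emph{some} $G$-orthogonal coordinate frame already forces the specific eigen-line fields to be involutive, which requires an argument that neither you nor the paper supplies.
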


The proof is given in the appendix.

As a result, if the eigen-line fields are not involutive at any point $p$, it is not possible to form orthogonal coordinates locally (and hence globally), and there is no function $h$ such that both $h(\mathcal{Z}) = g(\mathcal{Z})$ and the pull-back metric $J_h^{\top} J_h$ is diagonal. Hence, the manifold spanned by $g$ cannot be well represented by a VAE with a diagonal Gaussian posterior.

\section{Free-form Injective Flows}
\label{sec:fif}

Free-form Injective Flows are generative models that learn a manifold and maximize likelihood on it. In this section, we provide a new derivation of the training objective using a Laplace approximation.

Assume the same generative model as in VAEs, with a standard normal prior $p(z)$ and a Gaussian decoder $p(x|z)$. Define the joint negative log-likelihood as
\begin{equation}
u(x, z) = -\log p(x, z) = -\log p(x|z) - \log p(z).
\end{equation}

Let
\begin{equation}
f(x) = \arg\min_z u(x, z),
\end{equation}
so $f(x)$ is the mode of the posterior $p(z|x)$. If we assume that the second and higher-order derivatives are small, we can apply a Laplace approximation:
\begin{equation}
u(x, z) \approx u(x, f(x)) + \frac{1}{2} (z - f(x))^\top H(x) (z - f(x)),
\end{equation}
where $H(x)$ is the Hessian of $u(x, z)$ with respect to $z$, evaluated at $z = f(x)$.

This implies the marginal likelihood is approximately
\begin{align}
p(x) &= \int \exp(-u(x, z)) \, dz \\
&\approx \exp(-u(x, f(x))) \int \exp\left(-\frac{1}{2} (z - f(x))^\top H(x) (z - f(x))\right) dz \\
&= \exp(-u(x, f(x))) \cdot \det(H(x)^{-1})^{1/2}.
\end{align}

Taking the negative log, we get an approximate expression for the negative log likelihood:
\begin{equation}
-\log p(x) \approx u(x, f(x)) + \frac{1}{2} \log \det H(x).
\end{equation}

\subsection{Gradient of the determinant term}

We would like to use the estimate of $-\log p(x)$ as a training objective for the decoder. Of the two terms, the determinant term is difficult to optimize directly. Using Jacobi’s formula, the gradient of the log-determinant with respect to parameters of $g$ is
\begin{equation}
\nabla \log \det H(x) = \operatorname{tr}(H(x)^{-1} \nabla H(x)).
\end{equation}

Under our approximation, we use the relationship:
\begin{equation}
H(x)^{-1} \approx \sigma^2 f'(x) (g'(f(x)))^{+\top},
\end{equation}
where $+$ denotes the pseudoinverse. Then the gradient becomes approximately
\begin{equation}
\nabla \frac{1}{2} \log \det H(x) \approx \nabla \operatorname{tr} \left( \text{SG}\left[f'(x)\right] \cdot g'(f(x)) \right),
\end{equation}
where SG denotes stop-gradient. See the appendix for a detailed derivation. Thus, the overall training loss becomes:
\begin{equation}
\mathcal{L}_{\text{FIF}}(x) = u(x, f(x)) + \operatorname{tr}\left(\text{SG}[f'(x)] \cdot g'(f(x))\right).
\end{equation}
As in \citet{sorrenson2024lifting}, the trace term can be efficiently estimated using vector-Jacobian products and Hutchinson-style trace estimation \citep{hutchinson1989stochastic}:
\begin{equation}
    \operatorname{tr}\left(\text{SG}[f'(x)] \cdot g'(f(x))\right) \approx \text{SG}[v^\top f'(x)] g'(f(x)) v
\end{equation}
where $v$ is sampled from a distribution such that $\mathbb{E}[vv^\top] = I$.

\subsection{The need for regularization}

Our derivation assumes that second and higher-order derivatives of $g$ are small. To ensure this holds, we must regularize $g$ appropriately.

Drawing inspiration from VAEs, we can use KL divergence to a Gaussian posterior as a regularizer. As discussed, this naturally supresses second and higher-order derivatives of $g$. The current objective minimizes an approximation of the KL-divergence $D_\text{KL}(q(x)\|p(x))$, equivalent to minimizing $-\log p(x)$ over the data. From \cref{eq:kl-joint-x-z} we see that adding regularization of the form $\mathbb{E}_{q(x)}[D_\text{KL}(q(z|x)\|p(z|x))]$ makes the loss into the standard VAE target $D_\text{KL}(q(x,z)\|p(x,z))$. This provides a practical method to enforce smoothness and control curvature. In the next section, we define a new VAE model that incorporates this idea.

\section{Free-form Injective Flow-based Variational Autoencoders}
\label{sec:five}

In this section we derive our new model FIVE as a synthesis between free-form injective flows and variational autoencoders.

From the Laplace approximation in the previous section we had:
\begin{equation}
p(z|x) \approx \mathcal{N}(z; f(x), H(x)^{-1})
\end{equation}
and
\begin{equation}
    H(x)^{-1} \approx \sigma^2 f'(x) (g'(f(x)))^{+\top} \approx \sigma^2 f'(x) f'(x)^\top.
\end{equation}
Based on this, we define a variational posterior as:
\begin{equation}
q(z|x) = \mathcal{N}(z; f(x), \sigma^2 f'(x) f'(x)^\top).
\end{equation}

\subsection{Sampling}

To sample from $q(z|x)$:
\begin{equation}
z = f(x) + \sigma f'(x) v, \quad \text{with } v \sim \mathcal{N}(0, I).
\end{equation}

\subsection{Loss function: ELBO}

The negative evidence lower bound (ELBO) loss is given by:
\begin{equation}
\mathcal{L}_{\text{ELBO}}(x) = \mathbb{E}_{q(z|x)}[-\log p(x|z)] + D_{\text{KL}}(q(z|x) \| p(z)).
\end{equation}

The KL divergence term evaluates to:
\begin{align}
D_{\text{KL}}(q(z|x) \| p(z)) &= \frac{1}{2} \left( \|f(x)\|^2 + \operatorname{tr}(\sigma^2 f'(x) f'(x)^\top) - d \right. \\
&\left. \quad - \log \det(\sigma^2 f'(x) f'(x)^\top) \right),
\end{align}
where $d$ is the dimensionality of $z$.

The gradient of the log-determinant with respect to encoder parameters is:
\begin{equation}
\nabla \frac{1}{2} \log \det(f'(x) f'(x)^\top) = \operatorname{tr} \left( f'(x)^+ \nabla_x f'(x) \right) \approx \operatorname{tr}(g'(f(x)) \nabla f'(x)).
\end{equation}

\subsection{Consistency of approximations}

In our derivation of the FIVE model, we made two opposing approximations involving the Jacobians of the encoder and decoder:

\begin{enumerate}
    \item When approximating the posterior covariance $H(x)^{-1}$, we used
    \begin{equation}
    H(x)^{-1} \approx \sigma^2 f'(x) f'(x)^\top,
    \end{equation}
    effectively substituting $f'(x)$ for $(g'(f(x)))^+$.
    
    \item In the gradient of the log-determinant term, we used the approximation
    \begin{equation}
    \nabla_x \log \det(f'(x) f'(x)^\top) \approx \operatorname{tr}(g'(f(x)) \nabla_x f'(x)),
    \end{equation}
    which substitutes $g'(f(x))$ for $(f'(x))^+$.
\end{enumerate}

These approximations may appear contradictory at first glance. However, this ``in-out'' substitution turns out to be well-motivated: in fact, we can show that under a linear setting, the resulting model exactly reproduces the training distribution.

\begin{theorem}
Let $f$ and $g$ be linear maps, and let $q(x)$ be a zero-mean distribution with covariance matrix $\Sigma_x$. Then, the global optimum of the FIVE loss corresponds to a model $p(x)$ that is a Gaussian distribution with mean zero and covariance $\Sigma_x$:
\begin{equation}
p(x) = \mathcal{N}(x; 0, \Sigma_x),
\end{equation}
which is the best possible solution attainable by any linear decoder $g$.
\end{theorem}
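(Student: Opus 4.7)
The plan is to substitute the linear parameterization $f(x) = Fx$ and $g(z) = Gz$ into the FIVE objective and carry out a finite-dimensional matrix optimization. Under this parameterization the variational family reads $q(z|x) = \mathcal{N}(Fx,\,\sigma^2 FF^\top)$ and the model marginal is itself Gaussian, $p(x) = \mathcal{N}(0,\,GG^\top + \sigma^2 I)$, so the expected ELBO becomes a closed-form function of $F$, $G$, $\Sigma_x$ and $\sigma^2$ involving only traces and one log-determinant.

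The cleanest route is through the decomposition
\begin{equation*}
\mathbb{E}_{q(x)}[\mathcal{L}_{\text{ELBO}}]
= D_{\text{KL}}(q(x)\,\|\,p(x)) + \mathbb{E}_{q(x)}\bigl[D_{\text{KL}}(q(z|x)\,\|\,p(z|x))\bigr] + \text{const},
\end{equation*}
the linear-Gaussian instance of \cref{eq:kl-joint-x-z} (the constant being the differential entropy of $q(x)$). Both KL summands are non-negative, so the infimum of the loss is their simultaneous infimum. The first term only sees $q(x)$ through its second moments because $p(x)$ is constrained to be Gaussian; among zero-mean Gaussians, $\mathcal{N}(0,\Sigma_x)$ uniquely minimizes it by moment matching, so this term demands $GG^\top + \sigma^2 I = \Sigma_x$. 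The second term vanishes when $F$ equals the Bayes-optimal linear encoder $(\sigma^2 I + G^\top G)^{-1}G^\top$ and $\sigma^2 FF^\top$ equals the true posterior covariance $\sigma^2(\sigma^2 I + G^\top G)^{-1}$; a short calculation shows that both constraints can be satisfied jointly only in the limit $\sigma \to 0$, where $F \to G^{+}$ and $GG^\top \to \Sigma_x$, yielding $p(x) \to \mathcal{N}(0,\Sigma_x)$. The ``best possible'' statement is then immediate: every linear decoder induces a zero-mean Gaussian $p(x)$, and $\mathcal{N}(0,\Sigma_x)$ is the forward-KL-optimal choice of covariance.

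For the supporting computation I would diagonalize $\Sigma_x = U\Lambda U^\top$ and exploit the orthogonal invariance of the loss under $(F,G)\to(O_z F U^\top,\, U G O_z^\top)$ to split the matrix problem into independent scalar optimizations in each eigendirection of $\Lambda$. Solving $\partial/\partial g_i = 0$ and $\partial/\partial f_i = 0$ gives $g_i f_i = \lambda_i/(\lambda_i+\sigma^2)$ and $f_i^2 = 1/(\lambda_i+\sigma^2)$, from which $g_i^2 = \lambda_i^2/(\lambda_i+\sigma^2)$ and the induced marginal variance along mode $i$ is $g_i^2 + \sigma^2 = \lambda_i + \sigma^4/(\lambda_i+\sigma^2) \to \lambda_i$ as $\sigma \to 0$. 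The main obstacle is precisely this coupling between $\sigma$ and the two KL summands: for any fixed $\sigma > 0$ the constrained variational form $(Fx,\,\sigma^2 FF^\top)$ cannot simultaneously match both the mean and covariance of the true linear-Gaussian posterior, so the simultaneous minimizer is only reached in the joint limit $\sigma \to 0$ (equivalently, when $\sigma^2$ is treated as an additional optimization variable). Care is required because $\log\det(\sigma^2 FF^\top)$ diverges in this limit; the argument goes through because the divergent pieces cancel against reconstruction and entropy contributions, in the same pattern as the classical probabilistic-PCA analysis.
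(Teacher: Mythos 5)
Your proposal optimizes the wrong objective. The theorem is about the FIVE loss, which is \emph{not} the ELBO: it replaces the exact log-determinant contribution $\tfrac12\log\det\bigl(f'(x)f'(x)^\top\bigr)$ in the KL with the stop-gradient proxy $\operatorname{tr}\bigl(\text{SG}[g'(f(x))]\cdot f'(x)\bigr)$. That substitution changes the loss surface, so the identity
\begin{equation*}
\mathbb{E}_{q(x)}[\mathcal L_{\text{ELBO}}] = D_{\text{KL}}(q(x)\|p(x)) + \mathbb{E}_{q(x)}\bigl[D_{\text{KL}}(q(z|x)\|p(z|x))\bigr] + \text{const}
\end{equation*}
that drives your argument no longer holds — the modified term is neither the true entropy of $q(z|x)$ nor any other piece of a joint KL, so the loss does not split into two non-negative divergences.

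The mismatch is visible in your scalar fixed points. You obtain $f_i^2 = 1/(\lambda_i+\sigma^2)$ and $g_i^2 = \lambda_i^2/(\lambda_i+\sigma^2)$; these are stationary points of the \emph{true} ELBO (where the log-det term contributes $-1/v$ to the $v$-gradient). In the one-dimensional FIVE loss the log-det surrogate is $-\text{SG}[w]\,v$, whose $v$-gradient is $-w$ instead of $-1/v$. Substituting the $w$-stationarity condition $wv = \lambda/(\lambda+\sigma^2)$ into this modified $v$-equation gives $(\lambda+\sigma^2)v = w$ and hence $w^2 = \lambda$, $v = \sqrt{\lambda}/(\lambda+\sigma^2)$ — the paper's Lemma 1. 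The decoder variance equals $\lambda_i$ \emph{exactly} for every $\sigma$, so $W W^\top = \Sigma_x$ and $g_\# p(z) = \mathcal{N}(0,\Sigma_x)$ without taking any limit (the limit $\sigma\to 0$ is only needed for the full marginal $p(x)=\mathcal{N}(0, WW^\top + \sigma^2 I)$). Your approach only recovers this in the $\sigma\to 0$ limit and, more importantly, misses the actual content of the theorem: that the \emph{heuristic} ``in-out'' substitution — which is what makes FIVE cheap — still has the correct global optimum. Showing the true ELBO has the right optimum is a much weaker and essentially known statement. To fix the proposal you would need to write down the FIVE loss as a deterministic function of $(W,V)$ \emph{with the stop-gradient retained}, and find the stationary points of that modified objective directly, as the paper does in Lemmas 1 and 2.
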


\noindent This result shows that our opposing approximations are consistent in the linear case, and FIVE recovers the data distribution exactly under optimal conditions. This supports the idea that the approximations, while heuristic in the general nonlinear case, form a sound basis for learning in the regime where $f$ and $g$ behave approximately linearly in local regions of the data manifold.

\section{Experiments}
\label{sec:experiments}

\subsection{Implementation details}

\paragraph{Learnable noise level}

In typical VAE models, $\sigma^2$, the variance of noise added in the data space, is a fixed hyperparameter. We follow the approach of $\sigma$-VAE \citep{rybkin2021simple} and make it learnable in all four models we consider. In practice, we add an additional learnable parameter $\log \sigma$ to our network and make sure we don't drop any terms containing $\sigma$ in the relevant densities (e.g., $\log p(x|z)$).

\paragraph{Importance sampling}

Since the integral $p(x) = \int p(x,z) dz$ is typically intractable, most VAE implementations estimate $\mathbb{E}_{q(x)}[\log p(x)]$ through importance sampling. This uses the identity
\begin{equation}
    p(x) = \int p(x,z) dz = \int \frac{q(z|x)}{q(z|x)} p(x,z) dz = \mathbb{E}_{q(z|x)}\left[\frac{p(x,z)}{q(z|x)}\right]
\end{equation}
which in practice is estimated by $K$ importance samples $z_k \sim q(z|x)$:
\begin{equation}
    p(x) \approx \frac{1}{K} \sum_{k=1}^K \frac{p(x,z_k)}{q(z_k|x)}.
\end{equation}
Due to Jensen's inequality, the estimated mean log density is a lower bound on the true value, with the bound becoming tight as $K \to \infty$.

Since the free-form injective flow (FIF) does not come with a pre-defined variational posterior, we construct $q(z|x)$ in the same way as in FIVE, meaning we define
\begin{equation}
    q(z|x) = \mathcal{N}(z; f(x), \sigma^2 f'(x) f'(x)^\top).
\end{equation}
We find that this is a sufficiently good approximation to estimate likelihoods via importance sampling.

\paragraph{Full-covariance VAE}

Rather than predicting a lower-triangular Cholesky factor as in \cite{kingma2019introduction}, we predict a lower-triangular $A$ which we symmetrize to form $\log \Sigma = (A + A^\top)/2$. We then compute $\Sigma$ via the matrix exponential. This is a convenient parameterization since $\Sigma$ is guaranteed to be positive definite and we can leverage the identity
\begin{equation}
    \log \det \exp(M) = \operatorname{tr}(M)
\end{equation}
for a square matrix $M$.

\subsection{Image data}

We use the same experimental setting as Variational Laplace Autoencoders \citep{park2019variational}, training two network sizes on flattened MNIST and CIFAR10. The small network has a single hidden layer of dimension 256 and latent dimension of 16 for MNIST and double these numbers for CIFAR10. The larger network has two hidden layers of dimension 500, and a latent dimension of 50 (same for both datasets). We use \texttt{ReLU} activations. We train for 2000 epochs using Adam with learning rate $5 \times 10^{-4}$ and batch size 128 and select the best epoch based on a small validation set (size 1000) split from the training set. We report log likelihoods on the held-out test set (see \cref{tab:image-data}).

We train on four models: i) a standard VAE with diagonal covariance posterior \citep{kingma2014auto} ii) a VAE with a full covariance posterior (FC-VAE) \citep{kingma2019introduction} iii) free-form injective flow (FIF) \citep{sorrenson2024lifting} iv) free-form injective flow-based VAE (FIVE). We state our results in \cref{tab:image-data}. We can see that FIVE outperforms or matches the other models, evaluated on test likelihood, in all four experimental settings. 

\begin{table}[t]
\caption{Comparison of different models on MNIST and CIFAR10 using small and larger networks. We report mean log likelihoods on the unseen test set, estimated using importance sampling with 100 samples per data point. Unbiased standard deviations are reported over 3 runs. Best model in bold. Models which are statistically indistinguishable from best model in italics.}
\centering
\label{tab:image-data}
\begin{tabular}{lcccc}
\toprule
 & \multicolumn{2}{c}{\textbf{MNIST}} & \multicolumn{2}{c}{\textbf{CIFAR10}} \\
\cmidrule(lr){2-3} \cmidrule(lr){4-5}
\textbf{Model} & Small network & Larger network & Small network & Larger network \\
\midrule
VAE                     & $729.83 \pm 3.99$ & $935.73 \pm 8.60$ & $2389.46 \pm 9.86$ & $\it{2680.36 \pm 13.86}$ \\
FC-VAE                  & $732.24 \pm 1.96$ & $770.16 \pm 1.22$ & $2388.02 \pm 8.93$ & $2531.18 \pm 53.94$ \\
\midrule
FIF                     & $742.06 \pm 4.30$ & $\bf{1116.31 \pm 27.29}$ & $2363.68 \pm 10.43$ & $2646.96 \pm 14.29$ \\
\midrule
FIVE (ours)     & $\bf{774.41 \pm 1.87}$ & $\bf{1115.17 \pm 7.67}$ & $\bf{2412.02 \pm 7.67}$ & $\bf{2684.44 \pm 9.33}$ \\
\bottomrule
\end{tabular}
\end{table}

\paragraph{MNIST}

FIVE outperforms VAEs in both the small network and larger network settings, with a more pronounced difference when using larger networks. The dominant effect is that the two VAE models suffer from posterior collapse when trained with a larger latent space, leading to underperforming likelihoods. The FC-VAE is particularly prone to posterior collapse. Why FC-VAE is more prone to posterior collapse than the standard VAE, and why FIVE does not suffer from posterior collapse despite its formulation as a VAE are open questions. FIVE is presumably helped by its similarity in form to FIF, which also does not suffer from posterior collapse.

FIVE also outperforms or matches FIF, demonstrating the usefulness of the additional regularization.

\paragraph{CIFAR10}

FIVE is the best performing model in both the small and larger network settings, although in the larger network setting, the standard VAE matches the performance of FIVE. Again we see FC-VAE underperforming the standard VAE when given a larger latent space, due to its higher susceptibility to posterior collapse.

Similarly to the MNIST experiments, FIVE outperforms FIF, showing a distinct benefit of the improved formulation.

\section{Discussion}
\label{sec:discussion}

\paragraph{Summary of contributions.}
We introduced FIVE, a VAE whose posterior covariance is \emph{implicitly} induced by the Jacobians of an encoder–decoder pair trained as a free-form injective flow.  
This yields a full–covariance Gaussian posterior at essentially the same asymptotic cost as the diagonal case, while retaining the geometric fidelity and likelihood performance of FIFs. Our theoretical analysis clarifies \emph{when} diagonal posteriors are fundamentally insufficient, and our empirical study shows consistent gains on image benchmarks.

\subsection*{Limitations}

\begin{itemize}
    \item \textbf{Local‐linearity assumption.}  
    Our Laplace approximation assumes that higher-order derivatives of the decoder are small in the neighborhood explored by the noise scale $\sigma$.  
    When the data manifold exhibits sharp curvature on that scale the approximation may break down, potentially degrading likelihood estimates and sample quality.

    \item \textbf{Dataset scope.}  
    Experiments were limited to MNIST, CIFAR-10 %
    These benchmarks are standard but modest in scale; future work should test FIVE on higher-resolution natural images, 3-D data, and text to confirm generality.
\end{itemize}

\paragraph{Potential societal impact}
Due to the small scale of our experiments, we do not see direct societal impact of our work. However, if scaled up in future work, it may pose similar opportunities and risks as other generative models, e.g., potential for use in scientific discovery, as well as misuse for malicious purposes.

In summary, FIVE demonstrates that moving beyond diagonal covariance is both theoretically principled and practically feasible. We hope this work encourages the community to revisit posterior design in VAEs and to explore geometrically informed generative modeling at scale.

\section*{Acknowledgments}
This work is supported by Deutsche Forschungsgemeinschaft (DFG, German Research Foundation) under Germany's Excellence Strategy EXC-2181/1 - 390900948 (the Heidelberg STRUCTURES Cluster of Excellence). 

\bibliographystyle{plainnat}
\bibliography{references}

\begin{thebibliography}{27}
\providecommand{\natexlab}[1]{#1}
\providecommand{\url}[1]{\texttt{#1}}
\expandafter\ifx\csname urlstyle\endcsname\relax
  \providecommand{\doi}[1]{doi: #1}\else
  \providecommand{\doi}{doi: \begingroup \urlstyle{rm}\Url}\fi

\bibitem[Brehmer and Cranmer(2020)]{brehmer2020flows}
Johann Brehmer and Kyle Cranmer.
\newblock Flows for simultaneous manifold learning and density estimation.
\newblock \emph{Advances in Neural Information Processing Systems}, 33:\penalty0 442--453, 2020.

\bibitem[Caterini et~al.(2021)Caterini, Loaiza-Ganem, Pleiss, and Cunningham]{caterini2021rectangular}
Anthony~L Caterini, Gabriel Loaiza-Ganem, Geoff Pleiss, and John~P Cunningham.
\newblock Rectangular flows for manifold learning.
\newblock \emph{Advances in Neural Information Processing Systems}, 34:\penalty0 30228--30241, 2021.

\bibitem[Contributors()]{guild}
Guild~AI Contributors.
\newblock Guild ai.
\newblock \url{https://github.com/guildai/guildai}.

\bibitem[Cremer et~al.(2018)Cremer, Li, and Duvenaud]{cremer2018inference}
Chris Cremer, Xuechen Li, and David Duvenaud.
\newblock Inference suboptimality in variational autoencoders.
\newblock In \emph{International conference on machine learning}, pages 1078--1086. PMLR, 2018.

\bibitem[Draxler et~al.(2024)Draxler, Sorrenson, Zimmermann, Rousselot, and K{\"o}the]{draxler2024freeform}
Felix Draxler, Peter Sorrenson, Lea Zimmermann, Armand Rousselot, and Ullrich K{\"o}the.
\newblock Free-form flows: {{Make}} any architecture a normalizing flow.
\newblock In \emph{International {{Conference}} on {{Artificial Intelligence}} and {{Statistics}}}, 2024.

\bibitem[Harris et~al.(2020)Harris, Millman, van~der Walt, et~al.]{numpy}
Charles~R. Harris, K.~Jarrod Millman, Stéfan~J. van~der Walt, et~al.
\newblock Array programming with {NumPy}, 2020.

\bibitem[Hunter(2007)]{matplotlib}
J.~D. Hunter.
\newblock Matplotlib: A 2d graphics environment.
\newblock \emph{Computing in Science \& Engineering}, 9\penalty0 (3):\penalty0 90--95, 2007.
\newblock \doi{10.1109/MCSE.2007.55}.

\bibitem[Hutchinson(1989)]{hutchinson1989stochastic}
Michael~F Hutchinson.
\newblock A stochastic estimator of the trace of the influence matrix for {{Laplacian}} smoothing splines.
\newblock \emph{Communications in Statistics-Simulation and Computation}, 18\penalty0 (3):\penalty0 1059--1076, 1989.

\bibitem[Kingma and Welling(2014)]{kingma2014auto}
Diederik~P Kingma and Max Welling.
\newblock Auto-encoding variational {{Bayes}}.
\newblock In \emph{International {{Conference}} on {{Learning Representations}}}, 2014.

\bibitem[Kingma et~al.(2019)Kingma, Welling, et~al.]{kingma2019introduction}
Diederik~P Kingma, Max Welling, et~al.
\newblock An introduction to variational autoencoders.
\newblock \emph{Foundations and Trends in Machine Learning}, 12\penalty0 (4):\penalty0 307--392, 2019.

\bibitem[Kingma et~al.(2016)Kingma, Salimans, Jozefowicz, Chen, Sutskever, and Welling]{kingma2016improved}
Durk~P Kingma, Tim Salimans, Rafal Jozefowicz, Xi~Chen, Ilya Sutskever, and Max Welling.
\newblock Improved variational inference with inverse autoregressive flow.
\newblock \emph{Advances in Neural Information Processing Systems}, 29, 2016.

\bibitem[Kobyzev et~al.(2021)Kobyzev, Prince, and Brubaker]{kobyzev2021normalizing}
Ivan Kobyzev, Simon~J.D. Prince, and Marcus~A. Brubaker.
\newblock Normalizing flows: {{An}} introduction and review of current methods.
\newblock \emph{IEEE Transactions on Pattern Analysis and Machine Intelligence}, 43\penalty0 (11):\penalty0 3964--3979, 2021.
\newblock ISSN 0162-8828, 2160-9292, 1939-3539.
\newblock \doi{10.1109/TPAMI.2020.2992934}.

\bibitem[Krizhevsky(2009)]{krizhevsky2009learning}
Alex Krizhevsky.
\newblock Learning multiple layers of features from tiny images.
\newblock Technical report, Department of Computer Science, University of Toronto, 2009.

\bibitem[Lecun et~al.(1998)Lecun, Bottou, Bengio, and Haffner]{lecun1998gradient}
Yann Lecun, Léon Bottou, Yoshua Bengio, and Patrick Haffner.
\newblock Gradient-based learning applied to document recognition.
\newblock \emph{Proceedings of the IEEE}, 86\penalty0 (11):\penalty0 2278--2324, 1998.

\bibitem[Lee(2013)]{lee2013manifolds}
John~M. Lee.
\newblock \emph{Introduction to Smooth Manifolds}, volume 218 of \emph{Graduate Texts in Mathematics}.
\newblock Springer, 2nd edition, 2013.

\bibitem[Maal{\o}e et~al.(2019)Maal{\o}e, Fraccaro, Li{\'e}vin, and Winther]{maaloe2019biva}
Lars Maal{\o}e, Marco Fraccaro, Valentin Li{\'e}vin, and Ole Winther.
\newblock Biva: A very deep hierarchy of latent variables for generative modeling.
\newblock \emph{Advances in neural information processing systems}, 32, 2019.

\bibitem[Papamakarios et~al.(2021)Papamakarios, Nalisnick, Rezende, Mohamed, and Lakshminarayanan]{papamakarios2021normalizing}
George Papamakarios, Eric Nalisnick, Danilo~Jimenez Rezende, Shakir Mohamed, and Balaji Lakshminarayanan.
\newblock Normalizing flows for probabilistic modeling and inference.
\newblock \emph{Journal of Machine Learning Research}, 22\penalty0 (57):\penalty0 1--64, 2021.

\bibitem[Park et~al.(2019)Park, Kim, and Kim]{park2019variational}
Yookoon Park, Chris Kim, and Gunhee Kim.
\newblock Variational {Laplace} autoencoders.
\newblock In \emph{International conference on machine learning}, pages 5032--5041. PMLR, 2019.

\bibitem[Paszke et~al.(2019)Paszke, Gross, Massa, et~al.]{paszke2019pytorch}
Adam Paszke, Sam Gross, Francisco Massa, et~al.
\newblock Pytorch: An imperative style, high-performance deep learning library.
\newblock \emph{Advances in Neural Information Processing Systems}, 32, 2019.

\bibitem[Rezende and Mohamed(2015)]{rezende2015variational}
Danilo Rezende and Shakir Mohamed.
\newblock Variational inference with normalizing flows.
\newblock In \emph{International conference on machine learning}, pages 1530--1538. PMLR, 2015.

\bibitem[Rezende et~al.(2014)Rezende, Mohamed, and Wierstra]{rezende2014stochastic}
Danilo~Jimenez Rezende, Shakir Mohamed, and Daan Wierstra.
\newblock Stochastic backpropagation and approximate inference in deep generative models.
\newblock In \emph{International conference on machine learning}, pages 1278--1286. PMLR, 2014.

\bibitem[Rybkin et~al.(2021)Rybkin, Daniilidis, and Levine]{rybkin2021simple}
Oleh Rybkin, Kostas Daniilidis, and Sergey Levine.
\newblock Simple and effective vae training with calibrated decoders.
\newblock In \emph{International conference on machine learning}, pages 9179--9189. PMLR, 2021.

\bibitem[Saxe et~al.(2014)Saxe, McClelland, and Ganguli]{saxe2014exact}
A~Saxe, J~McClelland, and S~Ganguli.
\newblock Exact solutions to the nonlinear dynamics of learning in deep linear neural networks.
\newblock In \emph{Proceedings of the International Conference on Learning Represenatations 2014}. International Conference on Learning Represenatations 2014, 2014.

\bibitem[S{\o}nderby et~al.(2016)S{\o}nderby, Raiko, {Maal{\o}e}, {S{\o}nderby}, and Winther]{sonderby2016ladder}
Casper~Kaae S{\o}nderby, Tapani Raiko, Lars {Maal{\o}e}, S{\o}ren~Kaae {S{\o}nderby}, and Ole Winther.
\newblock Ladder {{Variational Autoencoders}}.
\newblock In \emph{Advances in {{Neural Information Processing Systems}}}, volume~29. Curran Associates, Inc., 2016.

\bibitem[Sorrenson et~al.(2024)Sorrenson, Draxler, Rousselot, Hummerich, Zimmermann, and K{\"o}the]{sorrenson2024lifting}
Peter Sorrenson, Felix Draxler, Armand Rousselot, Sander Hummerich, Lea Zimmermann, and Ullrich K{\"o}the.
\newblock Lifting architectural constraints of injective flows.
\newblock In \emph{International {{Conference}} on {{Learning Representations}}}, 2024.

\bibitem[Tomczak and Welling(2018)]{tomczak2018vae}
Jakub Tomczak and Max Welling.
\newblock {{VAE}} with a {{VampPrior}}.
\newblock In \emph{International {{Conference}} on {{Artificial Intelligence}} and {{Statistics}}}, pages 1214--1223. PMLR, 2018.

\bibitem[Tomczak and Welling(2016)]{tomczak2016improving}
Jakub~M Tomczak and Max Welling.
\newblock Improving variational auto-encoders using {Householder} flow.
\newblock \emph{arXiv preprint arXiv:1611.09630}, 2016.

\end{thebibliography}

\newpage

\appendix
\section{Proofs and derivations}

\subsection{Proof of theorem 1}

\begin{theorem}\label{thm:diagonal-pullback}
    Let $g: \mathbb{R}^d \to \mathbb{R}^n$ be injective, with $d \le n$.
    Let $p(z) = \mathcal{N}(z; 0, I_d)$ and $p(x|z) = \mathcal{N}(x; g(z), \sigma^2 I_n)$. 
    Let $q(z|x) = \mathcal{N}(z; f_\mu(x), \operatorname{diag}(f_{\sigma^2}(x)))$ with $f_\mu, f_{\sigma^2}: \mathbb{R}^n \to \mathbb{R}^d$ be a Gaussian variational posterior with diagonal covariance.
    Suppose we maximize the ELBO with respect to $g$, $f_\mu$ and $f_{\sigma^2}$:
    \begin{equation}
        \text{ELBO}(x) = \mathbb{E}_{q(z|x)}[\log p(x|z)] - D_{\text{KL}}(q(z|x)\|p(z)).
    \end{equation}
    Then, the optimal decoder $g^*$ has an approximately diagonal pull-back metric, i.e., $g^{*\prime}(z)^\top g^{*\prime}(z)$ is approximately diagonal for all $z$, where $g^{*\prime}(z) = \frac{\partial g^*(z)}{\partial z}$ is the Jacobian of the optimal decoder.
\end{theorem}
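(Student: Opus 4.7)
The plan is to combine ELBO optimality with a Laplace expansion of the true posterior, making each step of the sketch in the main text precise. First, I would use the joint-KL decomposition in \cref{eq:kl-joint-x-z} together with the fact that the negative ELBO equals $D_{\text{KL}}(q(x,z)\|p(x,z))$ up to a model-independent constant. Since both summands of that decomposition are non-negative, at a global optimum with a sufficiently expressive variational family the expected inner KL must vanish, yielding $p(z|x) = q(z|x)$ for $q(x)$-almost every $x$. The diagonal-Gaussian assumption on $q(z|x)$ then forces $p(z|x)$ itself to be a diagonal Gaussian on that set.

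Next, I would Taylor-expand
\begin{equation}
-\log p(x,z) \;=\; \tfrac{1}{2\sigma^2}\|x - g(z)\|^2 + \tfrac{1}{2}\|z\|^2 + \mathrm{const}
\end{equation}
around its mode $f(x) = \arg\min_z -\log p(x,z)$, which exists and is locally unique thanks to injectivity of $g$ and strict convexity of the prior contribution. A direct calculation gives
\begin{equation}
H(x) \;=\; I + \tfrac{1}{\sigma^2}\, g'(f(x))^\top g'(f(x)) \;+\; R(x),
\end{equation}
where $R(x) = -\sigma^{-2}\sum_i (x_i - g_i(f(x)))\, \nabla_z^2 g_i(f(x))$ collects the contraction of the residual with the second derivative of $g$. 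The Laplace approximation then reads $p(z|x) \approx \mathcal{N}(z; f(x), H(x)^{-1})$, and matching this against the diagonal Gaussian from Step~1 demands that $H(x)^{-1}$ be diagonal.

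The main technical hurdle is justifying that $R(x)$ and the higher-order Taylor terms can be dropped. I expect two complementary arguments to do the work: (i) if $p(z|x)$ were \emph{exactly} Gaussian for every $x$ in the support of $q(x)$, a short computation shows that $g$ must be affine on the range of $f$, which kills $R(x)$ outright; (ii) in the approximate regime, the Laplace bubble has width $\mathcal{O}(\sigma)$ in $z$, so the second-order correction to $g(z)$ over this bubble is $\mathcal{O}(\sigma^2 \|g''\|)$. For the likelihood $p(x|z)$ of width $\sigma$ to concentrate the posterior at all, this correction must be small compared to $\sigma$, i.e., $\sigma\|g''\| \ll 1$; under that condition $R(x)$ is one order in $\sigma$ smaller than the dominant $\sigma^{-2} g'(f(x))^\top g'(f(x))$ term and can be dropped consistently. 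Making this scaling rigorous, and stating the regularity required of $g$ and of the data distribution for the ``$g$ looks linear on the scale of $\sigma$'' picture to hold, is where I expect the bulk of the technical work to lie.

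Once $H(x) \approx I + \sigma^{-2} g'(f(x))^\top g'(f(x))$ is established, the inverse is diagonal if and only if $g'(f(x))^\top g'(f(x))$ is diagonal, so we obtain the desired condition at every point $f(x)$. To extend from the image of $f$ to all $z$, I would use that injectivity of $g$ implies, for $x$ concentrated on the manifold $g(\mathbb{R}^d)$, that $f(x)$ approximates the unique preimage $g^{-1}(x)$ and therefore ranges over (a dense subset of) the latent space as $x$ traverses the manifold. This upgrades the pointwise diagonal condition into the global statement ``$g'(z)^\top g'(z)$ is approximately diagonal for all $z$'' claimed in the theorem.
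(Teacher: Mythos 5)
Your overall strategy --- invoke the joint-KL decomposition, perform a Laplace expansion of $p(z|x)$ around the MAP point $f(x)$, and read off a diagonality constraint on $g'^\top g'$ --- is the same as the paper's, and your treatment of the remainder term $R(x)$ and the $O(\sigma)$ scaling is actually more explicit than what the paper writes. However, your Step~1 has a genuine logical gap that the paper's appendix proof is careful to avoid.

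You argue that ``at a global optimum with a sufficiently expressive variational family the expected inner KL must vanish, yielding $p(z|x) = q(z|x)$.'' But the variational family here is \emph{not} sufficiently expressive --- it is restricted to diagonal Gaussians, and that restriction is the entire content of the theorem. If $q$ could match any $p(z|x)$, there would be no constraint on $g$ at all and the theorem would be vacuous. In general $p(z|x)$ is not a diagonal Gaussian, so $\mathbb{E}_{q(x)}[D_{\text{KL}}(q(z|x)\|p(z|x))]$ cannot be driven exactly to zero by the encoder alone; what actually happens is a trade-off in which the joint objective pressures $g$ to make $p(z|x)$ \emph{as close as possible} to a diagonal Gaussian. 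The paper makes this quantitative: after the Laplace step it fixes $p(z|x)\approx\mathcal{N}(f(x),H^{-1})$, optimizes the diagonal $q$ analytically (yielding $\mu=f(x)$ and $s_k=(H^{-1})_{kk}$), and shows the \emph{residual} inner KL equals
\begin{equation}
\frac{1}{2}\Bigl[\operatorname{tr}(R^{-1})+\log\det R - d\Bigr],\qquad R=\Sigma^{-1/2}H^{-1}\Sigma^{-1/2},
\end{equation}
a nonnegative function of the correlation matrix $R$ that vanishes iff $R=I$, i.e.\ iff $H^{-1}$ (equivalently $H$, equivalently $J^\top J$) is diagonal. This residual is the term that joint optimization continues to suppress by adjusting $g$, and it is what converts ``would be nice if $p(z|x)$ were diagonal Gaussian'' into the actual claim about the optimal decoder. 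Replacing your ``must vanish / sufficiently expressive'' step with this computation (minimize the inner KL over the diagonal family with $p(z|x)$ held fixed in Laplace form, then ask what minimizes the remainder over $g$) closes the gap and recovers the paper's argument.

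The rest of your proposal --- the form of $H(x)$, the argument that the higher-order correction $R(x)$ is one order smaller in $\sigma$, and the extension from $f(\mathrm{supp}\,q(x))$ to all $z$ via injectivity of $g$ --- lines up with the paper, with your scaling discussion of $R(x)$ being a useful elaboration of what the paper compresses into ``discarding higher-order terms.''
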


\begin{proof}
As discussed in the main text, maximization of the ELBO is equivalent to minimization of the KL divergence between $q(x,z)$ and $p(x,z)$, which can be decomposed as
\begin{equation}
    D_\text{KL}(q(x,z)\|p(x,z)) = D_\text{KL}(q(x)\|p(x)) + \mathbb{E}_{q(x)}[D_\text{KL}(q(z|x)\|p(z|x))].
\end{equation}
Hence, the optimal solution will be such that $p(z|x)$ is approximately equal to $q(z|x)$, in order to minimize the second term. In practice this means that second and higher order derivatives of $g$ are suppressed, such that a Laplace (i.e., Gaussian) approximation for $p(z|x)$ is highly accurate.

To that end, define $f(x) = \arg\min_z \tfrac12 \|z\|^2+\frac{1}{2\sigma^2}\|x-g(z)\|^2$ and let $H(x)$ be the Hessian of $-\log p(x,z)$ in $z$, evaluated at $z=f(x)$. Discarding higher-order terms, this leads to
\begin{equation}
    H(x) = I_d+\frac{1}{\sigma^{2}}J^{\top}J,\qquad J=g'(f(x)).
\end{equation}
Hence, for small $\sigma$,
\begin{equation}
    p(z|x)=\mathcal{N}\!\bigl(z;f(x),\,H(x)^{-1}\bigr)+O(\sigma^{3}),
    \qquad
    H^{-1}=\sigma^{2}(J^{\top}J)^{-1}+O(\sigma^{4}).
\end{equation}  

For fixed $x$ the KL divergence between the optimal $q(z|x)$ and $p(z|x)$ is well approximated by
\begin{equation}
    \frac12\Bigl[
       \operatorname{tr}(H\Sigma)
       +(\mu-f(x))^{\top}H(\mu-f(x))
       -\log\det(\Sigma H)-d
    \Bigr],
\end{equation}
where $\mu=f_\mu(x)$ and the diagonal matrix $\Sigma=\operatorname{diag}f_{\sigma^2}(x)$.
The term in $\mu$ is minimized at $\mu=f(x)$; inserting this and differentiating with respect to each diagonal entry $s_k=(\Sigma)_{kk}$ gives the optimum
\begin{equation}\label{eq:s_k}
    s_k=(H^{-1})_{kk}.
\end{equation}
Substituting these back in to the KL divergence gives
\begin{equation}
    D_\text{KL}(q(z|x)\|p(z|x)) \approx \frac12\Bigl[
       \operatorname{tr}(R^{-1})
       +\log\det(R)-d
    \Bigr],
\end{equation}
where $R = \Sigma^{-1/2} H^{-1} \Sigma^{-1/2}$ is a correlation matrix. The divergence is minimized when $R=I$, meaning $H^{-1}$ and hence $H$ are diagonal. Therefore off-diagonal (but not diagonal) terms of $J^\top J$ are suppressed. 

Since the off-diagonal terms of $J^{\top}J$ are suppressed while the diagonal terms are not, $g^{*\prime}(z)^\top g^{*\prime}(z)$ of the optimal decoder will be approximately diagonal.
\end{proof}

\subsection{Proof of theorem 2}

\begin{theorem}[Local existence of orthogonal coordinates]
Let
\begin{equation}
  g : U \subset \mathcal{Z} \to \mathcal{X}
\end{equation}
be a smooth immersion on a connected open set $U$ (so $\operatorname{rank} g'(z) = d$ everywhere).
Set
\begin{equation}
  G = J_{g}^{\top} J_{g},
\end{equation}
the induced positive–definite metric on $U$.
Assume that every eigenvalue of $G$ is simple and varies smoothly, so one may choose a smooth orthonormal eigen-frame $\{e_{1},\dots,e_{d}\}$ with $G(e_{i},e_{i}) = \lambda_{i} > 0$.
For each $i$ define the rank-$1$ eigen-line field $\mathcal{L}_{i} = \operatorname{span}\{e_{i}\} \subset TU$.

The following statements are equivalent:
\begin{enumerate}
  \item \textbf{Orthogonal coordinates exist.}\;
        For every point $p \in U$ there is a neighborhood $V$ around $p$ and a diffeomorphism
        $\varphi : V \to W \subset \mathbb{R}^d$,
        $\;u = (u^{1},\dots,u^{d}) = \varphi(z)$, such that for
        $h := g \circ \varphi^{-1}$ the pull-back metric is diagonal:
        \begin{equation}
          h^{*}\langle\cdot,\cdot\rangle
          \;=\;
          \sum_{i=1}^{d} H_{i}^{2}(u)\,(du^{i})^{2},
          \qquad H_{i} > 0 \text{ smooth}.
        \end{equation}

  \item \textbf{Eigen-line fields are involutive.}\;
        For all $i \neq j$ one has
        \begin{equation}
          [e_{i},e_{j}] \;\in\; \operatorname{span}\{e_{i},e_{j}\}.
        \end{equation}
\end{enumerate}
\end{theorem}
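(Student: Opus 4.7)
The plan is to prove the two implications separately. The forward direction $(1)\Rightarrow(2)$ is an immediate computation in the given coordinates, while the converse $(2)\Rightarrow(1)$ is a Frobenius-based construction of a joint chart from the $d$ codimension-one eigen-distributions.

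For $(1)\Rightarrow(2)$, assume orthogonal coordinates $(u^{1},\dots,u^{d})$ exist near $p$, so the coordinate vector fields $\partial_{u^{i}}$ are pairwise $G$-orthogonal and are therefore eigenvectors of $G$. Since eigenvalues are simple, after possibly relabeling indices each $\partial_{u^{i}}$ is a smooth nonzero multiple of $e_{i}$, say $\partial_{u^{i}}=F_{i}e_{i}$. Coordinate fields commute, so expanding $[e_{i},e_{j}]=[F_{i}^{-1}\partial_{u^{i}},F_{j}^{-1}\partial_{u^{j}}]$ with the Leibniz rule produces only terms along $\partial_{u^{i}}$ and $\partial_{u^{j}}$, and hence along $e_{i}$ and $e_{j}$, giving involutivity.

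For $(2)\Rightarrow(1)$, I first define the codimension-one distributions $\mathcal{D}_{i}:=\operatorname{span}\{e_{j}:j\neq i\}$ and verify each is involutive: for sections $X=\sum_{j\neq i}a_{j}e_{j}$ and $Y=\sum_{k\neq i}b_{k}e_{k}$, the Leibniz rule expands $[X,Y]$ into terms proportional to $[e_{j},e_{k}]$ (which lie in $\operatorname{span}\{e_{j},e_{k}\}\subset\mathcal{D}_{i}$ by hypothesis) plus terms along $e_{j}$ and $e_{k}$, all inside $\mathcal{D}_{i}$. Applying the Frobenius theorem locally to each $\mathcal{D}_{i}$ yields smooth functions $u^{1},\dots,u^{d}$ with $\ker(du^{i})=\mathcal{D}_{i}$, so $du^{i}(e_{j})=0$ for $j\neq i$ and $du^{i}(e_{i})\neq 0$. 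Dually expanding $du^{i}$ in the coframe $\{e^{j}\}$ dual to $\{e_{j}\}$ gives $du^{i}=du^{i}(e_{i})\,e^{i}$, a nonzero multiple of a single coframe element, so $du^{1}\wedge\cdots\wedge du^{d}\neq 0$ and $\varphi:=(u^{1},\dots,u^{d})$ is a diffeomorphism on some neighborhood $V$ of $p$ onto an open subset of $\mathbb{R}^{d}$. In these coordinates $\partial_{u^{j}}$ lies in the one-dimensional subspace $\bigcap_{i\neq j}\ker(du^{i})$, which also contains $e_{j}$, giving $\partial_{u^{j}}=F_{j}e_{j}$ for a smooth nonvanishing $F_{j}$. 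The pull-back metric of $h=g\circ\varphi^{-1}$ therefore has components $G(\partial_{u^{j}},\partial_{u^{k}})=F_{j}F_{k}\lambda_{j}\delta_{jk}$, i.e., the claimed diagonal form with $H_{i}^{2}=F_{i}^{2}\lambda_{i}$.

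The main obstacle is assembling the pairwise bracket condition from (2) into honest involutivity of each $(d-1)$-dimensional distribution $\mathcal{D}_{i}$, which is what Frobenius actually requires. The simplicity-and-smoothness assumption on the eigenvalues is used here to guarantee that the $e_{i}$ are globally defined smooth fields on $U$, and again in the forward direction to uniquely identify each $\partial_{u^{i}}$ with some $e_{i}$. Once involutivity of each $\mathcal{D}_{i}$ is in hand, the remainder of the construction is dual-frame bookkeeping.
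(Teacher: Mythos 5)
Your argument follows the paper's proof closely in both directions. In $(2)\Rightarrow(1)$, the construction via the Frobenius theorem applied to the codimension-one distributions $\mathcal{D}_i=\operatorname{span}\{e_j:j\neq i\}$ mirrors the paper's use of integrability of the dual $1$-forms $\omega^i$ (which satisfy $\ker\omega^i=\mathcal{D}_i$); you are somewhat more explicit than the paper in deriving involutivity of each $\mathcal{D}_i$ from the pairwise bracket hypothesis via the Leibniz rule, a step the paper assumes silently before invoking Frobenius, and this is a worthwhile addition. The remainder — independence of the $du^i$, the local diffeomorphism $\varphi$, and the diagonal form of the pull-back metric — is the same dual-frame bookkeeping as in the paper, just expressed with $F_i$ in place of the paper's $H_i/\sqrt{\lambda_i}$.

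Both your proof and the paper's, however, rest on an unjustified step in $(1)\Rightarrow(2)$: the assertion that, in an orthogonal chart, the coordinate vector fields $\partial_{u^i}$ must (after relabeling) be scalar multiples of the $e_i$. You write that the $\partial_{u^i}$ ``are pairwise $G$-orthogonal and are therefore eigenvectors of $G$'', and the paper writes $e_i=H_i^{-1}\partial_i$ without justification, but pairwise $G$-orthogonality does not force a frame to be the eigenframe. The $e_i$ are Euclidean-orthonormal vectors satisfying $Ge_i=\lambda_i e_i$, while a $G$-orthogonal frame need not be Euclidean-orthogonal: for $G=\operatorname{diag}(1,2)$ the vectors $(1,1)$ and $(2,-1)$ are $G$-orthogonal yet not eigenvectors. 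So an orthogonal chart is not automatically aligned with the eigenframe, and the forward implication as written does not go through without an additional argument (or without strengthening statement 1 to demand a chart adapted to the eigenframe, which is what the converse direction actually produces). Since this is exactly the paper's own shortcut, I flag it as a weakness of the shared route rather than as a deviation on your part; but if you want a self-contained proof you would need to close this step.
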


In the proof, we use results from \cite{lee2013manifolds}, especially the Frobenius integrability theorem. Please refer to Chapter 19 of that work. The remaining chapters are a useful reference for differential geometry terminology and notation.

\begin{proof}
Let $U\subset\mathcal Z$ be the connected open set on which every
eigenvalue $\lambda_1,\dots,\lambda_d$ of the induced metric
\begin{equation}
    G = J_g^{\top} J_g
\end{equation}
is simple, and fix the smooth orthonormal eigen-frame
$\{e_1,\dots,e_d\}\subset TU$ satisfying
\begin{equation}
  G(e_i,e_j) \;=\; \lambda_i\,\delta_{ij},
  \qquad \lambda_i>0 .
\end{equation}
Denote the dual co‐frame by
$\{\omega^1,\dots,\omega^d\}\subset\Omega^1(U)$, so
$\omega^i(e_j)=\delta_{ij}$ for all $i,j$.

\textbf{Statement 1 implies statement 2}

Assume that there are \emph{orthogonal coordinates}
$u=(u^1,\dots,u^d)$ on some neighborhood $V\subset U$ such that
\begin{equation}\label{eq:diagMetric}
  h^{*}\langle\cdot,\cdot\rangle
  \;=\;
  \sum_{i=1}^{d} H_i^{2}(u)\,\bigl(du^{i}\bigr)^{2},
  \qquad
  h := g \circ \varphi^{-1},
  \quad
  H_i>0 .
\end{equation}
Set $\partial_i := \partial/\partial u^i$. From
\cref{eq:diagMetric} we obtain
\begin{equation}
  e_i \;=\; H_i^{-1}\,\partial_i ,
\end{equation}
and
\begin{equation}
  [\partial_i,\partial_j] \;=\; 0 .
\end{equation}
due to orthogonality. Hence
\begin{align}
  [e_i,e_j]
    &= H_i^{-1}H_j^{-1}[\partial_i,\partial_j]
       + (\partial_i H_j^{-1})\partial_j
       - (\partial_j H_i^{-1})\partial_i  \\
    &\in \operatorname{span}\{e_i,e_j\}, \nonumber
\end{align}
establishing involutivity of the eigen‐line fields.

\textbf{Statement 2 implies statement 1}

Suppose now that
\begin{equation}\label{eq:involutive}
  [e_i,e_j] \;\in\; \operatorname{span}\{e_i,e_j\}
  \quad\text{for all } i\neq j .
\end{equation}

We take the following steps:
\begin{enumerate}
    \item \textbf{Integrating the distributions.}
    The local Frobenius theorem \citep[Theorem 19.12]{lee2013manifolds} guarantees that,
    for every $i$, there exist a smooth function $u^i$ and a positive
    function $H_i$ such that
    \begin{equation}\label{eq:omega}
      \omega^i \;=\; \frac{H_i}{\sqrt{\lambda_i}}\,du^i.
    \end{equation}

    \item \textbf{Constructing the coordinate chart.}
    Because $\{\omega^1,\dots,\omega^d\}$ is linearly independent at each point, the differentials $\{du^1,\dots,du^d\}$ are independent by \cref{eq:omega}. Therefore the map
    \begin{equation}
        \varphi : V \to \mathbb R^{d},
        \qquad
        \varphi(z) \;=\; \bigl(u^1(z),\dots,u^d(z)\bigr),
    \end{equation}
    defined on a sufficiently small neighborhood $V\subset U$, has full rank and is a local diffeomorphism.

    \item \textbf{The metric in the new coordinates.}
    Using \cref{eq:omega},
    \begin{align}
      G
        &= \sum_{i=1}^{d} \lambda_i\,\omega^{i}\otimes\omega^{i}
           \;=\;
           \sum_{i=1}^{d}
           H_i^{2}\,(du^{i})^{2}.
    \end{align}
    Thus, writing $h := g\circ\varphi^{-1}$, we obtain
    \begin{equation}
        h^{*}\!\langle\,\cdot,\cdot\rangle
          \;=\;
          \sum_{i=1}^{d} H_i^{2}\,(du^{i})^{2},
    \end{equation}
    which is the diagonal form required.
\end{enumerate}

The two conditions are therefore equivalent.

\end{proof}

\subsection{Proof of theorem 3}

Recall the definition of the FIVE loss:
\begin{equation}\label{eq:five-loss}
    \mathcal{L}(x) = \mathbb{E}_{q(z|x)}[-\log p(x|z)] + \widetilde D_\text{KL}(q(z|x)\|p(z))
\end{equation}
where $\widetilde D_\text{KL}$ is the KL divergence with a substituted log-determinant term:
\begin{align}
D_{\text{KL}}(q(z|x) \| p(z)) &= \frac{1}{2} \left( \|f(x)\|^2 + \operatorname{tr}(\sigma^2 f'(x) f'(x)^\top) - d \right. \\
&\left. \quad - d \log \sigma^2 - \operatorname{tr}\left(\text{SG}[g'(f(x))] \cdot f'(x) \right) \right) \nonumber
\end{align}

In order to prove theorem 3, we first prove two lemmas which characterize the optimal solutions of the FIVE loss in the linear case.

\begin{lemma}[1-dimensional case]
\label{lem:five-optimum-1d}
Let $f: \mathbb{R} \to \mathbb{R}$ and $g: \mathbb{R} \to \mathbb{R}$ be linear maps, defined as $f(x) = v x$, $g(z) = w z$, with $v,w \in \mathbb{R}$. Let $q(x)$ be a zero-mean distribution with variance $\lambda > 0$. 

Then, the minimizers of the FIVE loss have the form
\begin{align}
w^{\star} &= \pm \sqrt{\lambda}, \\
v^{\star} &= \pm \frac{\sqrt{\lambda}}{\sigma^{2}+\lambda},
\end{align}
with the same sign in $w^{\star}$ and $v^{\star}$.
\end{lemma}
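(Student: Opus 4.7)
In 1D the FIVE loss collapses to an elementary function of the two scalars $(v,w)$ once we average over $x$, so the problem reduces to bivariate calculus. I would write the loss out explicitly, set both partial derivatives to zero while respecting the stop-gradient convention on $g'(f(x))$, and solve the resulting small algebraic system.

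\textbf{Key steps.} First I would substitute $f(x)=vx$, $g(z)=wz$ into the reconstruction term. Using $z\mid x \sim \mathcal{N}(vx,\sigma^2 v^2)$, the standard identity $\mathbb{E}[(x-wz)^2 \mid x] = x^2(1-wv)^2 + w^2\sigma^2 v^2$ together with $\mathbb{E}_{q(x)}[x^2]=\lambda$ yields a reconstruction contribution $\frac{\lambda(1-wv)^2}{2\sigma^2} + \frac{w^2 v^2}{2}$ up to constants in $(v,w)$. The approximate KL then contributes a polynomial term $\frac{1}{2}(\lambda+\sigma^2)v^2$ plus the stop-gradient piece, which is linear in $v$ with coefficient proportional to $w$; the $\log\sigma^2$ terms coming from the Gaussian normalization and from $-\log\det\Sigma$ are constants in $(v,w)$ and can be ignored.

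Second, I would compute $\partial L/\partial w$. Because $g'(f(x))=w$ sits inside a stop-gradient, the trace term contributes nothing to this derivative, so $\partial_w L = 0$ is a clean equation which, after dividing by $v$ (the $v=0$ case is handled separately), yields the product constraint
\begin{equation}
  wv \;=\; \frac{\lambda}{\lambda+\sigma^{2}}.
\end{equation}

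Third, I would compute $\partial L/\partial v$. This derivative does see the trace term, picking up a term linear in $w$. Substituting the product constraint from the previous step causes the mixed $w^2 v$ pieces to collapse, leaving a simple proportionality $w = \kappa (\lambda+\sigma^2) v$ for an explicit constant $\kappa$. Combining this with the product constraint produces a single equation in $v^2$ whose positive root is $v^\star = \sqrt{\lambda}/(\sigma^2+\lambda)$, and back-substituting gives $w^\star = \sqrt{\lambda}$. The sign freedom follows because the loss depends on $(v,w)$ only through the even pairings $v^2$, $w^2$, $wv$ together with the single stop-gradient product $wv$, so changing the joint sign of $(v,w)$ preserves the loss; and the constraint $wv = \lambda/(\lambda+\sigma^2) > 0$ forces the two signs to agree. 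I would also exclude the trivial critical point $v=w=0$ by comparing the loss value with that at $(v^\star,w^\star)$.

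\textbf{Main obstacle.} The subtle part is the stop-gradient bookkeeping: the $v$- and $w$-gradients of the same symbolic loss are computed as if they act on different functions, and this asymmetric differentiation is precisely the mechanism that makes the ``opposing approximations'' of Section~4.3 land on a consistent fixed point. If one naively differentiates the forward value of the loss, the trace term would contribute to both partial derivatives and the critical equations would become inconsistent, giving the wrong product constraint. Once the stop-gradient is handled correctly, the remaining work is short algebra.
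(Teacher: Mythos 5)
Your proposal is correct and follows essentially the same route as the paper's proof: expand the 1D loss explicitly in $(v,w)$, take partial derivatives while respecting the stop-gradient on $g'(f(x))=w$, solve the resulting algebraic system, and note the joint sign freedom. The organizational detail of isolating the product constraint $wv=\lambda/(\lambda+\sigma^2)$ first and explicitly ruling out the trivial critical point $(v,w)=(0,0)$ is a small presentational difference but not a different method.
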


\begin{proof}
With $q(z|x) = \mathcal N(vx,\sigma^{2}v^{2})$ a direct expansion gives  
\begin{equation}
\mathcal L(w,v)
      =\frac{\lambda(1-wv)^{2}}{2\sigma^{2}}
       +\frac12w^{2}v^{2}
       +\frac{\lambda v^{2}}{2}
       +\frac12\sigma^{2}v^{2}
       -\mathrm{SG}[w]v
       +\text{const}.
\label{eq:five-1d-loss}
\end{equation}

The ordinary derivative w.r.t.\ $w$ is
\begin{equation}\label{eq:dL-dw}
\frac{\partial\mathcal L}{\partial w}
        = -\frac{\lambda v(1-wv)}{\sigma^{2}} + wv^{2}
        = 0.
\end{equation}
And likewise w.r.t.\ $v$ (obeying the stop-gradient operation) is
\begin{equation}\label{eq:dL-dv}
\frac{\partial\mathcal L}{\partial v}
      =-\frac{\lambda}{\sigma^{2}}\,w(1-wv)
       +w^{2}v
       +(\sigma^{2}+\lambda)v
       -w
       =0.
\end{equation}

Multiplying \cref{eq:dL-dw} by $\sigma^2/v$ (assuming $v\neq0$) and rearranging terms gives
\begin{equation}
    v = \frac{\lambda}{w(\sigma^2 + \lambda)}.
\end{equation}
Substituting this expression into \cref{eq:dL-dv}, multiplying by $w(\sigma^2+\lambda)$ and simplifying gives
\begin{equation}
    w^2 = \lambda.
\end{equation}
Both solutions $w \pm \sqrt{\lambda}$ have the same loss value, so the minimal solutions are as stated in the theorem.
\end{proof}

\begin{lemma}[Higher-dimensional case]
\label{lem:five-optimum-general-case}
Let $f: \mathbb{R}^n \to \mathbb{R}^d$ and $g: \mathbb{R}^d \to \mathbb{R}^n$, $d \le n$, be linear maps, defined as $f(x) = V^\top x$, $g(z) = W z$, with $V,W \in \mathbb{R}^{n \times d}$. Let $q(x)$ be a zero-mean distribution with covariance matrix $\Sigma_x \succ 0$. 

Diagonalize $\Sigma=U\operatorname{diag}(\lambda_1,\dots,\lambda_n)U^{\top}$  
with $U^{\top}U=I_n$ and $\lambda_i>0$. Let $S$ be the set of indices of the $d$ largest eigenvalues of $\Sigma$, with $U_S$ and $\Lambda_S$ the corresponding eigenvector and eigenvalue matrices. Let $R \in O(d)$ be orthogonal.

Then, the minimizers of the FIVE loss have the form
\begin{align}
W^{\star} &= U_S \Lambda_S^{1/2} R, \\
V^{\star} &= U_S \Lambda_S^{1/2} (\sigma^{2}I_d+\Lambda_S)^{-1} R.
\label{eq:lemma_opt_V}
\end{align}
\end{lemma}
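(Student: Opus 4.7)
The plan is to reduce this matrix optimization to a purely spectral problem: compute the expected loss in closed form, derive the two matrix-valued stationarity conditions, eliminate $V$ in favor of $W$, and then extract the spectral structure of $W$ via an SVD argument. The scalar analysis of \cref{lem:five-optimum-1d} then reappears at the level of individual eigenvalues. The hardest step will be showing that the combined stationarity conditions force the singular vectors of $W$ to be eigenvectors of $\Sigma_x$; the rest is bookkeeping.

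First I would substitute $f(x) = V^{\top} x$ and $g(z) = W z$ into the FIVE loss, expand the reparameterisation $z = V^{\top} x + \sigma V^{\top} \varepsilon$ with $\varepsilon \sim \mathcal{N}(0,I_n)$, and take expectations over $q(x)$ using $\mathbb{E}[xx^{\top}] = \Sigma_x$. Up to additive constants this yields the quadratic objective
\begin{equation*}
\mathcal L(W,V) = \tfrac{1}{2\sigma^{2}}\operatorname{tr}\!\bigl((I-WV^{\top})\Sigma_x(I-VW^{\top})\bigr) + \tfrac{1}{2}\operatorname{tr}(V^{\top}V W^{\top}W) + \tfrac{1}{2}\operatorname{tr}(V^{\top}\Sigma_x V) + \tfrac{\sigma^{2}}{2}\operatorname{tr}(V^{\top}V) - \operatorname{tr}(V^{\top}W),
\end{equation*}
in which the stop-gradient on $W$ in the last term keeps it out of $\nabla_W\mathcal L$ but not out of $\nabla_V\mathcal L$, mirroring the 1-dimensional case.

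Next I would derive and combine the two stationarity conditions. Setting $\nabla_W\mathcal L=0$ yields $W = \Sigma_x V\,\bigl[V^{\top}(\Sigma_x + \sigma^{2}I)V\bigr]^{-1}$, while setting $\nabla_V\mathcal L=0$, multiplying by $\sigma^{2}$, and factoring out the positive-definite matrix $(\Sigma_x + \sigma^{2}I)$ collapses cleanly to the identity $V(W^{\top}W + \sigma^{2}I) = W$. Substituting the latter into the former and cancelling gives
\begin{equation*}
\Sigma_x W \;=\; W(W^{\top}W + \sigma^{2}I)^{-1} W^{\top}(\Sigma_x + \sigma^{2}I)\,W.
\end{equation*}
Now I would introduce the compact SVD $W = P D Q^{\top}$ with $P^{\top}P = I_d$, $D$ positive diagonal, and $Q \in O(d)$. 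The middle factor becomes $P D^{2}(D^{2}+\sigma^{2}I)^{-1} P^{\top}$, whose range is exactly $\operatorname{span}(P)$, so the equation forces $\Sigma_x P \in \operatorname{span}(P)$, i.e.\ $\Sigma_x P = P N$ for some $d \times d$ matrix $N$. Projecting both sides with $P^{\top}$ and cancelling the commuting diagonal factors of $D$ pins down $N = D^{2}$. Hence $P$ must consist of $d$ eigenvectors of $\Sigma_x$, say $P = U_T$ for some index set $T$ of size $d$, with $D^{2} = \Lambda_T$. Writing $R := Q^{\top} \in O(d)$ gives $W = U_T \Lambda_T^{1/2} R$, and the relation $V = W(W^{\top}W + \sigma^{2}I)^{-1}$ then yields $V = U_T \Lambda_T^{1/2}(\Lambda_T + \sigma^{2}I)^{-1} R$, matching the claimed form up to the choice of $T$.

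Finally I would evaluate $\mathcal L$ at each critical point to select the global minimum. Plugging the critical-point formulas back in, the $T$-dependent contributions telescope: per eigenvalue $\lambda$ in $\Lambda_T$, the combination $\sigma^{2}\lambda/(\lambda+\sigma^{2})^{2} + \lambda^{2}/(\lambda+\sigma^{2})^{2} = \lambda/(\lambda+\sigma^{2})$ exactly cancels the surrogate contribution $-\lambda/(\lambda+\sigma^{2})$. What remains is $\tfrac{1}{2\sigma^{2}}\sum_{i \in T^{c}}\lambda_i$ plus a $T$-independent constant, which is minimised precisely when $T$ indexes the $d$ largest eigenvalues, i.e.\ $T = S$. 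Rank-deficient critical points (with a vanishing entry of $D$) effectively correspond to $|T|<d$ and are strictly dominated by full-rank solutions, since enlarging $T$ can only shrink $\sum_{T^{c}}\lambda_i$.
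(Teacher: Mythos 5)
Your proof is correct, and it takes a genuinely different route from the paper's. The paper rotates to a frame where $\Sigma_x$ is diagonal, \emph{asserts} that the loss then splits into independent per-row scalar objectives and that rotational symmetry in latent space permits choosing $\widetilde W = U^\top W$ and $\widetilde V = U^\top V$ to have at most one nonzero entry per column, and then invokes \cref{lem:five-optimum-1d} per row. That decoupling claim is intuitive but not derived; it is precisely the spectral-alignment fact that needs justification. Your approach instead computes the two matrix stationarity conditions explicitly (carefully tracking that the stop-gradient removes the surrogate term from $\nabla_W\mathcal L$ but not from $\nabla_V\mathcal L$, matching the 1-D case), eliminates $V$ via $V(W^\top W+\sigma^2 I)=W$, and then extracts the spectral structure via an SVD: the derived identity $\Sigma_x P = P D^2$ \emph{proves} that the left singular vectors of the optimal $W$ are eigenvectors of $\Sigma_x$ with eigenvalues $D^2$, rather than assuming it. Your per-critical-point loss evaluation ($\mathcal L = \tfrac{1}{2\sigma^2}\sum_{i\in T^c}\lambda_i + \text{const}$ after the telescoping cancellation) then selects $T=S$, and you explicitly dismiss rank-deficient critical points, which the paper handles only implicitly. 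In short, your argument buys rigor — it closes the gap where the paper's proof assumes the spectral alignment — at the cost of more matrix bookkeeping, whereas the paper's approach is shorter and more transparent once one is willing to take the decoupling for granted.
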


\begin{proof}
Write $y=U^\top x$ so that the covariance matrix of $y$ is $\Lambda=\operatorname{diag}(\lambda_i)$. Define rotated parameters $\widetilde W = U^{\top}W$ and $\widetilde V = U^{\top}V$. We will solve for the optimal $(\widetilde W,\widetilde V)$ and then rotate the result back.

In the rotated frame the loss splits into a sum of $n$ independent one-dimensional objectives, each involving a row of $\widetilde W$ and $\widetilde V$ and the corresponding eigenvalue $\lambda_i$. Note that due to rotational symmetry in the latent space (i.e., replacing $(W,V)$ by $(WR,VR)$ for orthogonal $R$ leads to the same loss value), we can always choose $\widetilde W$ and $\widetilde V$ so that there is at most one nonzero value in each column. Call $\widetilde w_i$ the nonzero value of row $i$ of $\widetilde W$ (if it exists) and likewise for $\widetilde v_i$ and $\widetilde V$. The scalar problem was solved in \cref{lem:five-optimum-1d}, yielding the unique minimizer (up to sign)
\begin{equation}
(\widetilde w_i,\widetilde v_i)
=\bigl(\sqrt{\lambda_i},\;
        \tfrac{\sqrt{\lambda_i}}{\sigma^{2}+\lambda_i}
  \bigr).
\end{equation}

We have the maximum possible gain in the ELBO when selecting the largest $d$ eigenvalues and dropping the remaining $n-d$ eigenvalues. Therefore the optimal solution is when $\widetilde W$ has nonzero values only in the rows corresponding to these eigenvalues (and similarly for $\widetilde V$).

This leads to the stated solution, where we introduce orthogonal $R$ to account for the rotational symmetry of the latent space.
\end{proof}

\begin{theorem}
Let $f: \mathbb{R}^n \to \mathbb{R}^n$ and $g: \mathbb{R}^n \to \mathbb{R}^n$, be linear maps, defined as $f(x) = V^\top x$, $g(z) = W z$, with $V,W \in \mathbb{R}^{n \times n}$. Let $q(x)$ be a zero-mean distribution with covariance matrix $\Sigma_x \succ 0$. 

Then, in the limit $\sigma \to 0$, the global optimum of the FIVE loss corresponds to a model $p(x)$ that is a Gaussian distribution with mean zero and covariance $\Sigma_x$:
\begin{equation}
p(x) = \mathcal{N}(x; 0, \Sigma_x),
\end{equation}
which is the best possible solution attainable by any linear decoder $g$. Additionally, for nonzero $\sigma$, the pushforward of $p(z)$ through $g$ reproduces the same distribution:
\begin{equation}
g_\# p(z) = \mathcal{N}(x; 0, \Sigma_x).
\end{equation}
\end{theorem}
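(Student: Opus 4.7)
The plan is to derive this theorem as a direct corollary of \cref{lem:five-optimum-general-case} specialized to the square case $d = n$. In that setting the index set $S$ contains all eigenvalues, so $U_S = U$ and $\Lambda_S = \Lambda$, and the lemma yields that the minimizers of the FIVE loss have the form $W^{\star} = U \Lambda^{1/2} R$ and $V^{\star} = U \Lambda^{1/2} (\sigma^2 I_n + \Lambda)^{-1} R$ for any orthogonal $R \in O(n)$. This handles all of the optimization work; the rest is a direct computation of the induced distributions.

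First I would compute the pushforward $g_{\#} p(z)$ at the optimum. Since $p(z) = \mathcal{N}(0, I_n)$ and $g(z) = W^{\star} z$ is linear, the pushforward is a centered Gaussian with covariance
\begin{equation}
W^{\star} (W^{\star})^{\top} = U \Lambda^{1/2} R R^{\top} \Lambda^{1/2} U^{\top} = U \Lambda U^{\top} = \Sigma_x,
\end{equation}
using $R R^{\top} = I_n$ and the spectral decomposition of $\Sigma_x$. This establishes the second claim for every $\sigma > 0$.

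Next I would compute the marginal $p(x) = \int p(x|z) p(z)\,dz$. Because $p(x|z) = \mathcal{N}(x; W^{\star} z, \sigma^2 I_n)$, the marginal is the convolution of the pushforward with isotropic Gaussian noise, giving
\begin{equation}
p(x) = \mathcal{N}(x; 0, \Sigma_x + \sigma^2 I_n).
\end{equation}
Taking $\sigma \to 0$ yields $p(x) = \mathcal{N}(x; 0, \Sigma_x)$, which matches the first claim. Since any linear decoder can produce at best a zero-mean Gaussian pushforward, and $\Sigma_x$ is the true data covariance, this is optimal among linear decoders, as stated.

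There is not really a hard step here once \cref{lem:five-optimum-general-case} is in hand; the only subtlety worth flagging is the role of $R$. The orthogonal factor $R$ reflects a genuine symmetry of the FIVE loss (simultaneous rotation of encoder and decoder in latent space), and one should note that both $W^{\star} (W^{\star})^{\top}$ and the resulting $p(x)$ are independent of the choice of $R$, so the pushforward statement is unambiguous. The only other thing to check is that the $\sigma \to 0$ limit is taken at the optimum $W^{\star}$, which itself is $\sigma$-independent, so the limit is trivial; it is $V^{\star}$ that depends on $\sigma$, but $V^{\star}$ does not enter the pushforward computation.
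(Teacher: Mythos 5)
Your proposal is correct and follows essentially the same route as the paper's proof: both specialize \cref{lem:five-optimum-general-case} to the square case $d=n$, compute $W^{\star}(W^{\star})^{\top} = U\Lambda U^{\top} = \Sigma_x$, and read off the pushforward and the $\sigma \to 0$ limit of $p(x)$. Your version is slightly more explicit in writing out $p(x) = \mathcal{N}(0, \Sigma_x + \sigma^2 I_n)$ as a convolution and in noting the $R$-invariance, but the substance is identical.
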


\begin{proof}
By application of \cref{lem:five-optimum-general-case} in the case $d=n$, we find that
\begin{equation}
    W^{\star} = U \Lambda^{1/2} R,
\end{equation}
and hence
\begin{equation}
    W^{\star} W^{\star\top} = U \Lambda U^\top = \Sigma_x.
\end{equation}
As a result, the pushforward of $p(z)$ through $g$ is equal to $\Sigma_x$, which is also equal to $p(x)$ in the zero $\sigma$ limit.
\end{proof}

\subsection{Derivation of free-form injective flow gradient}
We want to show that:
\begin{equation}
\nabla \frac{1}{2} \log \det H(x) \approx \nabla \operatorname{tr} \left( \text{SG}\left[f'(x)\right] \cdot g'(f(x)) \right).
\end{equation}

Recall the definitions $u(x, z) = -\log p(x, z)$ and $f(x) = \arg\min_z u(x,z)$. Also recall that $H(x)$ is the Hessian of $u(x, z)$ with respect to $z$, evaluated at $z = f(x)$, which we can compactly denote by $H(x) = u_{zz}(x,f(x))$.

Due to the definition of $u$:
\begin{equation}
    u(x,z) = \frac12 \|z\|^2 + \frac{1}{2\sigma^2} \|x-g(z)\|^2 + \text{const},
\end{equation}
we have
\begin{equation}
    u_z(x,z) = z^\top - \frac{1}{\sigma^2}(x-g(z))^\top g'(z)
\end{equation}
and
\begin{equation}
    u_{zx}(x,z) = -\frac{1}{\sigma^2}g'(z)^\top.
\end{equation}
Due to the definition of $f$, it must be that $u_z(x,f(x))=0$. By taking the total derivative with respect to $x$, we find
\begin{equation}
    \frac{d}{dx} u_z(x,f(x)) = u_{zx}(x,f(x)) + u_{zz}(x,f(x))f'(x) = 0,
\end{equation}
which immediately leads to
\begin{equation}
    H(x)f'(x) = \frac{1}{\sigma^2}g'(f(x))^\top.
\end{equation}
Multiplying by the pseudoinverse of $f'(x)$ and taking the inverse leads to
\begin{equation}
    H(x)^{-1} = \sigma^2 f'(x) g'(f(x))^{+\top}.
\end{equation}

With the assumption that second-order and higher derivatives of $g$ are negligible, $H$ simplifies to
\begin{equation}
    H(x) \approx I + \frac{1}{\sigma^2} g'(f(x))^\top g'(f(x)).
\end{equation}
Taking a derivative $\nabla$ w.r.t.\ the parameters of $g$, we have
\begin{equation}
    \nabla H(x) \approx \frac{2}{\sigma^2} g'(f(x))^\top \nabla g'(f(x)).
\end{equation}
Substituting this into Jacobi's formula for the gradient of the determinant leads to the desired expression
\begin{align}
    \nabla \frac12 \log\det H(x) &= \frac12\operatorname{tr}\left(H(x)^{-1}\nabla H(x)\right) \\ &\approx \operatorname{tr}\left(f'(x) g'(f(x))^{+\top} g'(f(x))^\top \nabla g'(f(x)) \right) \\ &= \operatorname{tr}\left(f'(x) \nabla g'(f(x)) \right) \\ &= \nabla \operatorname{tr} \left( \text{SG}\left[f'(x)\right] \cdot g'(f(x)) \right).
\end{align}

\section{Demonstration of constraints of diagonal covariance}

To explore the constraints imposed by different VAEs, we train FIVE, a full-covariance VAE (FC-VAE), and a standard (diagonal covariance) VAE on a toy dataset, using the same architecture for each. 
The dataset is generated by sampling $x, y \sim \mathcal{N}(0, 1)$, setting $z = x^2 + y^2$ and then adding isotropic Gaussian noise in $\mathbb{R}^3$ with standard deviation $\sigma = 0.2$. It is specially constructed so that the posterior of the ground truth generative process is not diagonal. 
The architecture consists of a fully connected net with a width of $256$, a depth of $2$ and SiLU activation functions. We trained the model using the Adam optimizer with a learning rate of $10^{-4}$ and weight decay of $0.001$. We generated $10000$ samples, training each model over $100$ epochs using a batch size of $50$. We further fixed the noise value across models to equal the true noise value of the toy dataset.
For further regularization we used orthogonal initialization as proposed by \cite{saxe2014exact}.

The different learned parameterizations through the decoder are displayed in \cref{fig:learned parameterization} and the corresponding encodings are visualized in \cref{fig:learned encodings}. In both figures, we show the ground truth (``explicit'') parameterization or encoding compared to those learned by a FIVE, a FC-VAE, and a standard VAE.
To visualize the learned parametrization of the manifold we mapped a latent-space coordinate grid through the decoder and projected to the $xy$ plane. Correspondingly, to visualize the encoding we mapped the ground truth parameterization of the manifold through the encoder.

We see that the VAE learns a parametrization similar to polar coordinates: $(x(u, v), y(u, v), z(u, v)) = (u\cos v, u \sin v, u^2)$, leading to an approximately diagonal pullback metric. In contrast, the FIVE learns a parametrization more similar to the explicit parametrization $(x(u, v), y(u, v), z(u, v)) = (u, v, u^2 + v^2)$.
The FC-VAE seems to learn a parametrization more similar to the VAE than to the FIVE. This might be explained by the approximation gap being more relevant than the capacity of the variational model as argued by \cite{cremer2018inference} and as the FIVE learns the covariance matrix implicitly its approximation gap during training might be smaller.

We further provide a comparison between the true posteriors and the learned approximation of each model evaluated at the point $(x, y, z) = (1, 1, 2)$ in \cref{fig:learned posteriors}. Here we can see that the VAE not only learns a diagonal posterior $q(z|x)$ but also shapes the decoder such that the true posterior $p(z|x)$ is also approximately diagonal, in agreement with \cref{thm:diagonal-pullback}.

\begin{figure}[t]
  \centering
  \begin{subfigure}[b]{0.24\textwidth}
    \centering
    \includegraphics[trim={1cm 0 0 0},clip, height=\textwidth]{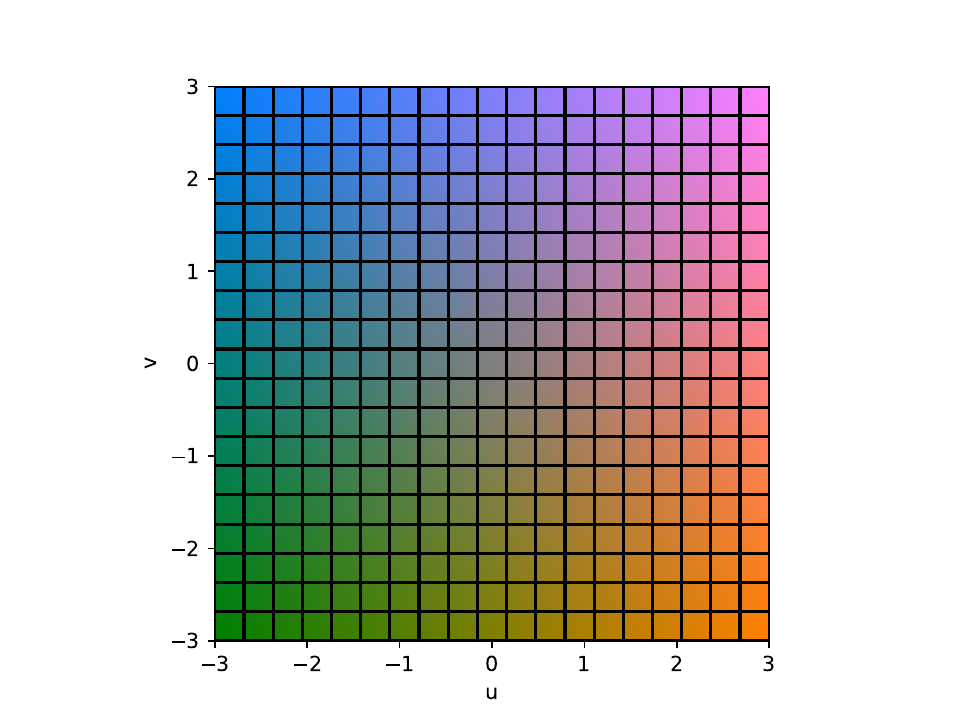}
    \caption{Explicit}
  \end{subfigure}%
  \hfill
  \begin{subfigure}[b]{0.24\textwidth}
    \centering
    \includegraphics[trim={1cm 0 0 0},clip,height=\textwidth]{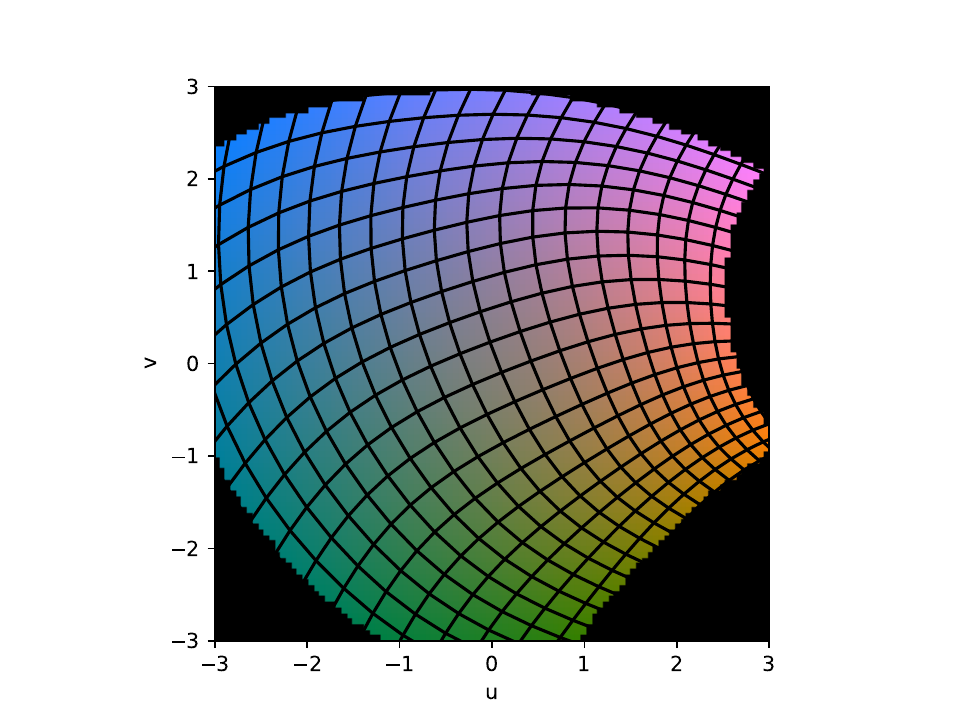}
    \caption{FIVE}
  \end{subfigure}%
  \hfill
  \begin{subfigure}[b]{0.24\textwidth}
    \centering
    \includegraphics[trim={1cm 0 0 0},clip,height=\textwidth]{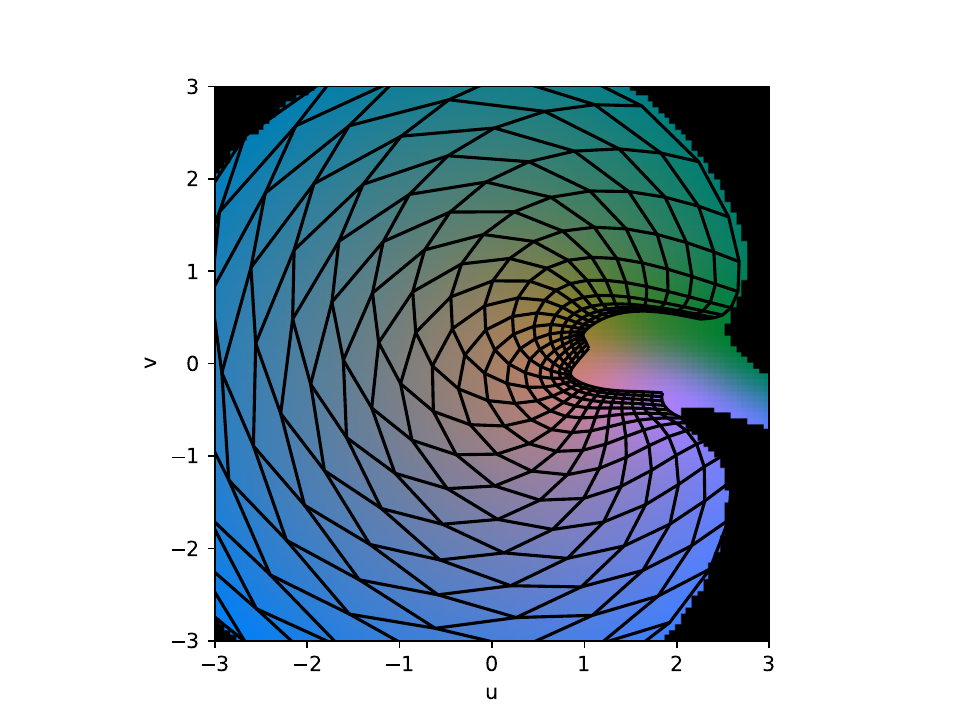}
    \caption{FC-VAE}
  \end{subfigure}
  \hfill
  \begin{subfigure}[b]{0.24\textwidth}
    \centering
    \includegraphics[trim={1cm 0 0 0},clip,height=\textwidth]{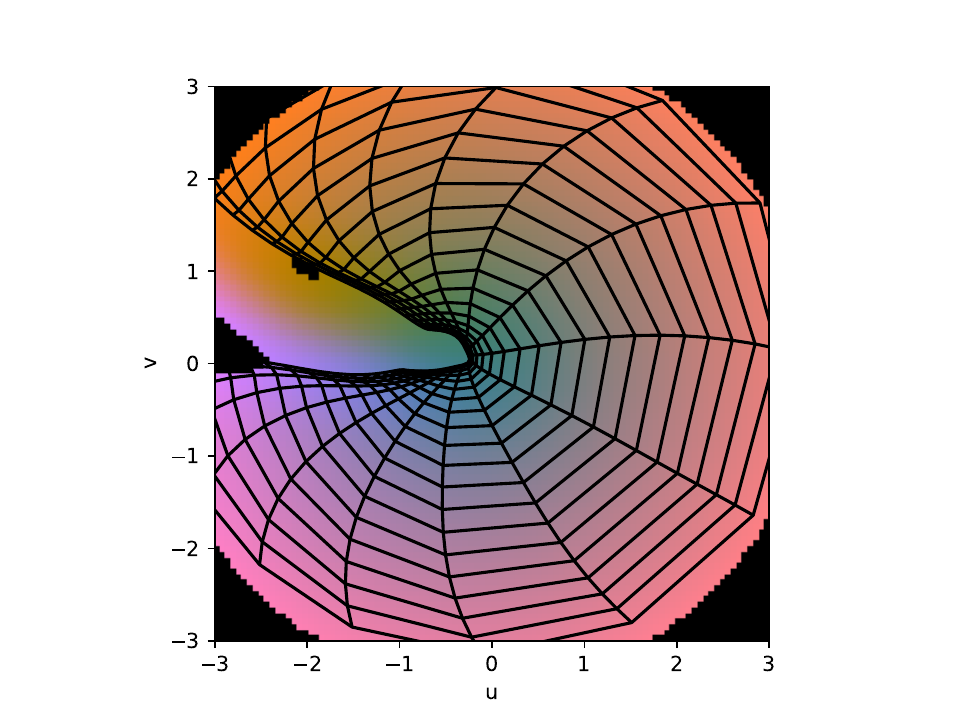}
    \caption{VAE}
  \end{subfigure}
  
  \caption{Parameterizations of the manifold. Left: ground truth explicit parameterization $z = x^2+y^2$, projected to the $xy$ plane. The remaining three sub-figures show parameterizations learned by different models, obtained by mapping a latent-space coordinate grid through the decoder and projecting to the $xy$ plane.} 
  \label{fig:learned parameterization}
\end{figure}

\begin{figure}[ht]
  \centering
  \begin{subfigure}[b]{0.24\textwidth}
    \centering
    \includegraphics[trim={1cm 0 0 0},clip,height=\textwidth]{figures/vae_parametrization_experiments/encoder_representation_explicit.pdf}
    \caption{Explicit}
  \end{subfigure}
  \hfill
  \begin{subfigure}[b]{0.24\textwidth}
    \centering
    \includegraphics[trim={1cm 0 0 0},clip,height=\textwidth]{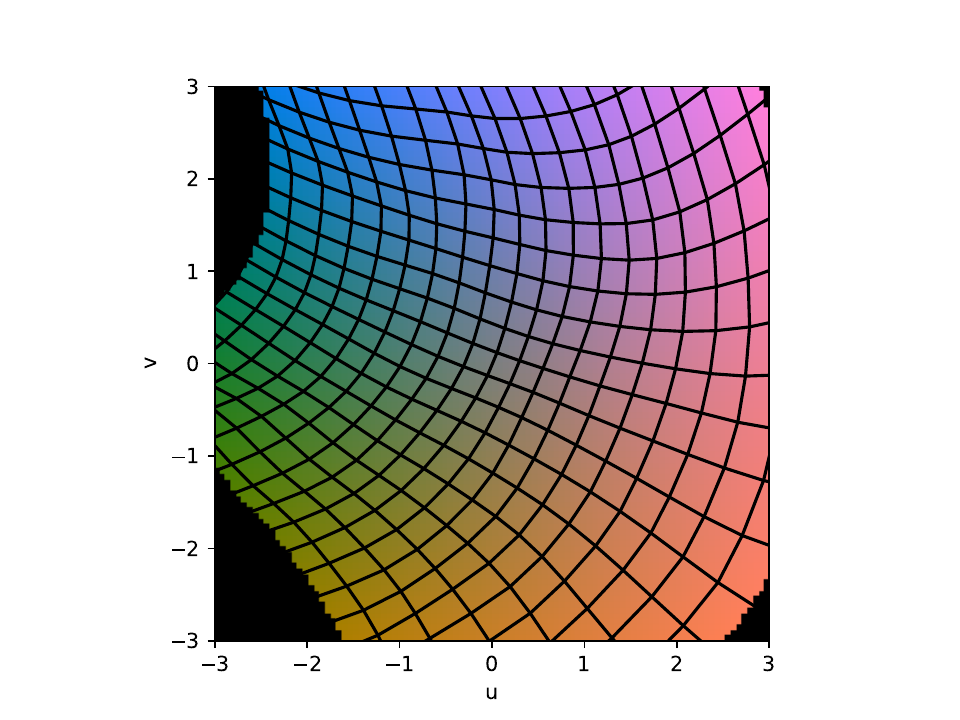}
    \caption{FIVE}
  \end{subfigure}%
  \hfill
  \begin{subfigure}[b]{0.24\textwidth}
    \centering
    \includegraphics[trim={1cm 0 0 0},clip,height=\textwidth]{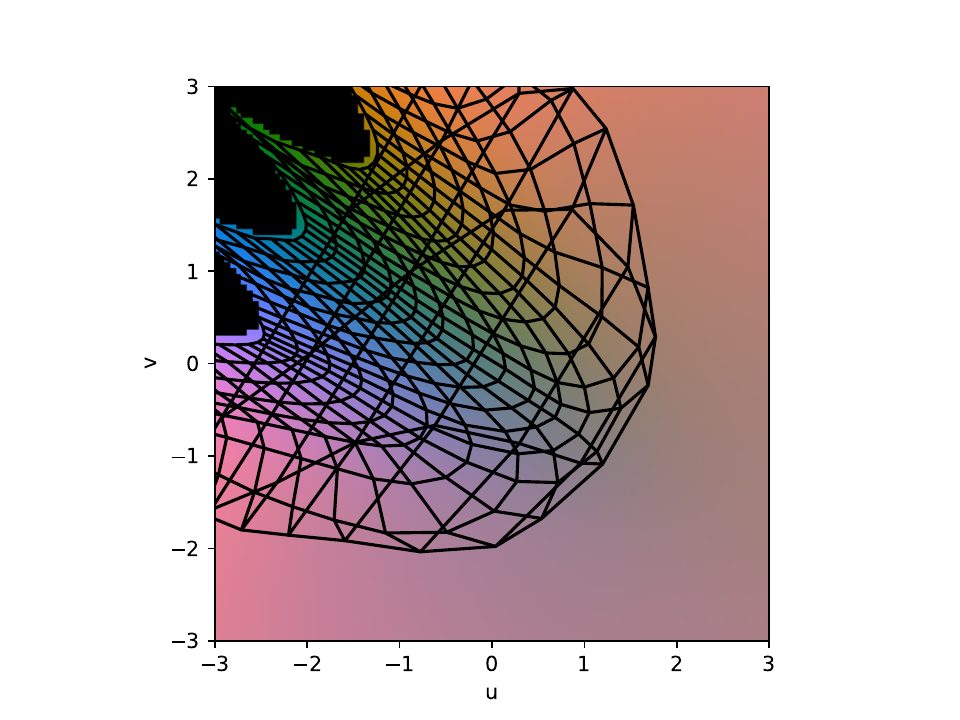}
    \caption{FC-VAE}
  \end{subfigure}
  \hfill
  \begin{subfigure}[b]{0.24\textwidth}
    \centering
    \includegraphics[trim={1cm 0 0 0},clip,height=\textwidth]{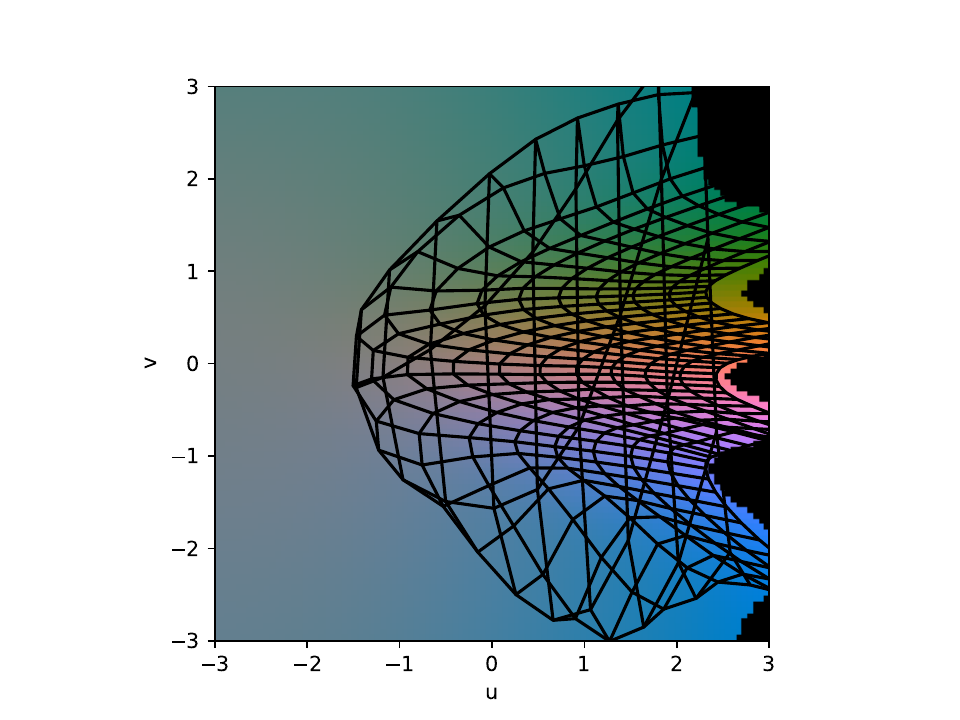}
    \caption{VAE}
  \end{subfigure}
  
  \caption{Encodings of the manifold. Left: ground truth latent-space coordinate grid. The remaining 3 sub-figures show encodings learned by different models, obtained by mapping the ground truth parameterization of the manifold through the encoder.}
  \label{fig:learned encodings}
\end{figure}

\begin{figure}[ht]
  \centering
  \begin{subfigure}[b]{0.33\textwidth}
    \centering
    \includegraphics[trim={2cm 0 0 0},clip, height=\textwidth]{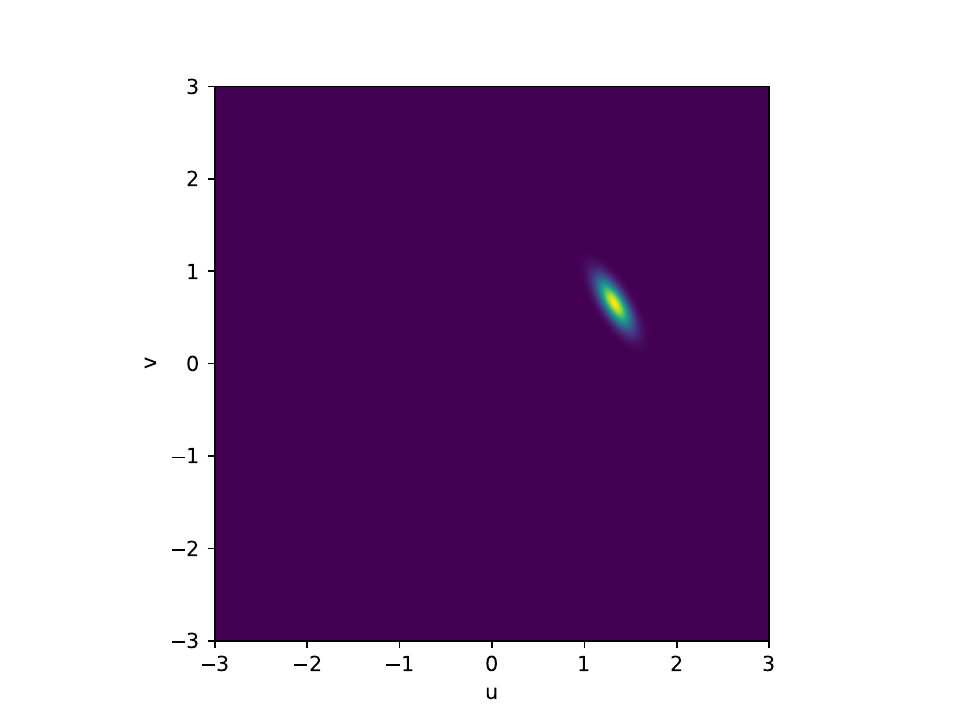}
    \caption{FIVE (learned)}
  \end{subfigure}%
  \hfill
  \begin{subfigure}[b]{0.33\textwidth}
    \centering
    \includegraphics[trim={2cm 0 0 0},clip,height=\textwidth]{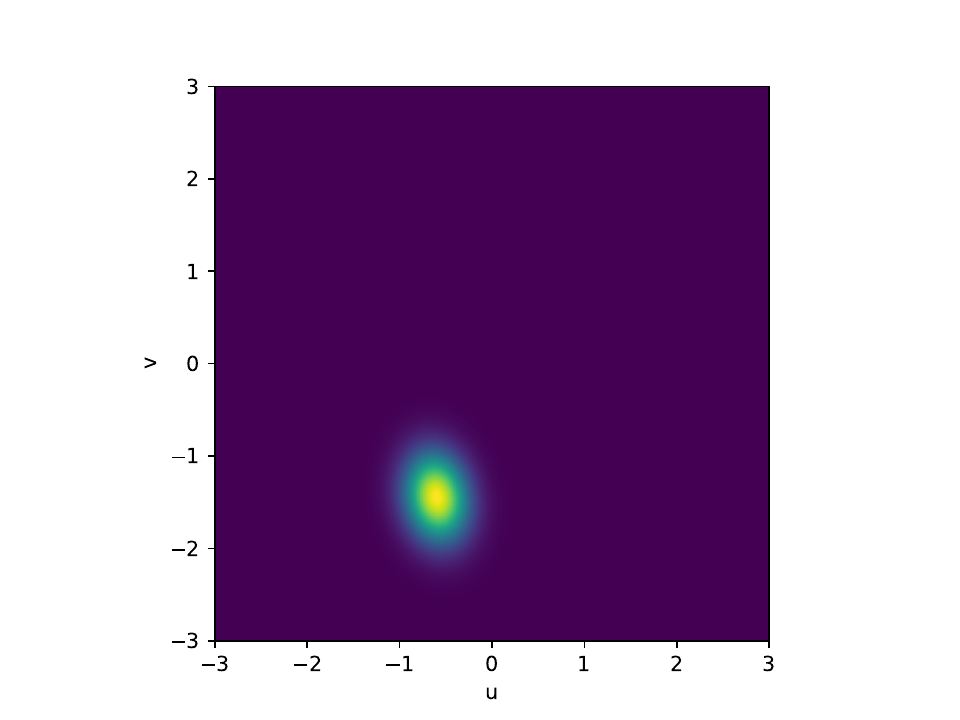}
    \caption{FC-VAE (learned)}
  \end{subfigure}%
  \hfill
  \begin{subfigure}[b]{0.33\textwidth}
    \centering
    \includegraphics[trim={2cm 0 0 0},clip,height=\textwidth]{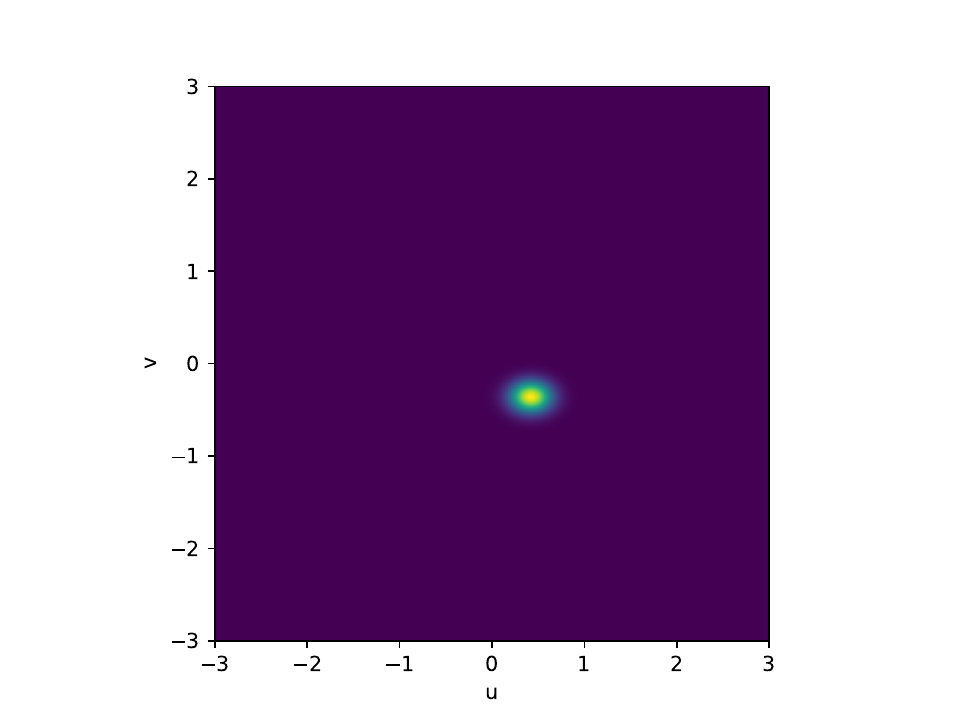}
    \caption{VAE (learned)}
  \end{subfigure}
  \begin{subfigure}[b]{0.33\textwidth}
    \centering
    \includegraphics[trim={2cm 0 0 0},clip, height=\textwidth]{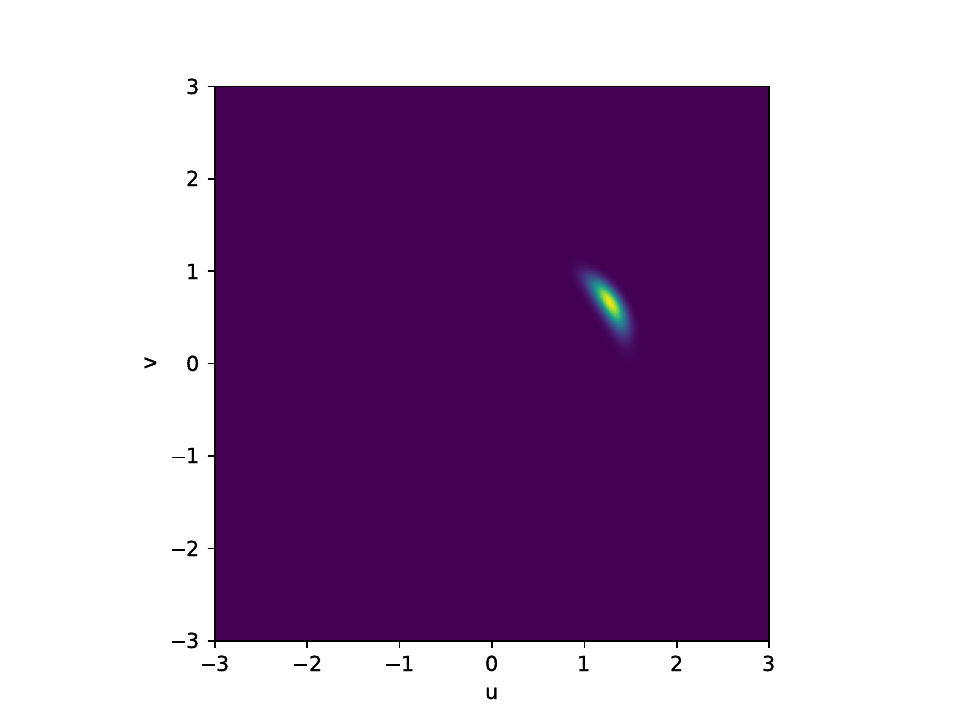}
    \caption{FIVE (true)}
  \end{subfigure}%
  \hfill
  \begin{subfigure}[b]{0.33\textwidth}
    \centering
    \includegraphics[trim={2cm 0 0 0},clip,height=\textwidth]{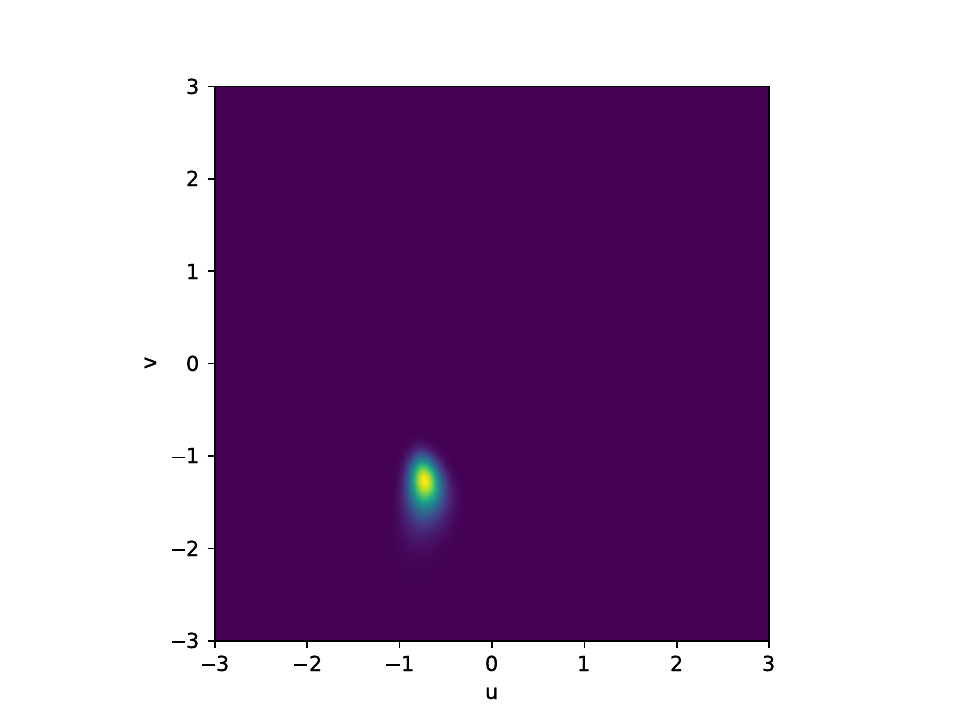}
    \caption{FC-VAE (true)}
  \end{subfigure}%
  \hfill
  \begin{subfigure}[b]{0.33\textwidth}
    \centering
    \includegraphics[trim={2cm 0 0 0},clip,height=\textwidth]{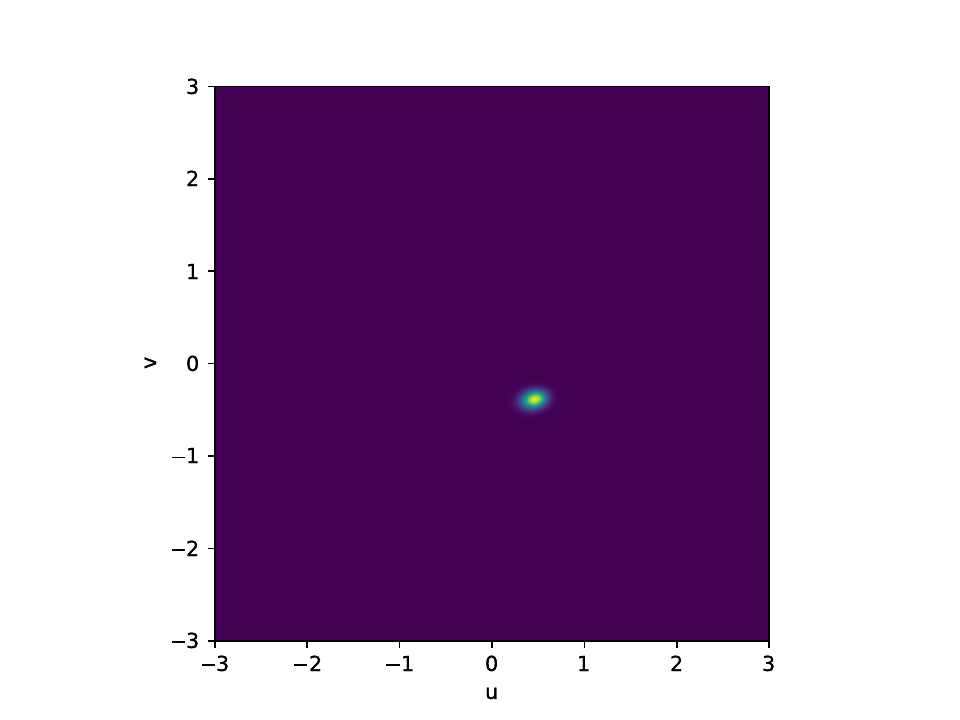}
    \caption{VAE (true)}
  \end{subfigure}
  
  \caption{Different learned variational posteriors $q(z|x)$ (top row) compared to the true posterior $p(z|x)$ defined by the decoder (bottom row), for three different models (one model per column). Posteriors are evaluated at the data-space point $(1, 1, 2)$.}
  \label{fig:learned posteriors}
\end{figure}

\section{Experimental details}

\subsection{Libraries and datasets}

We used the Python libraries NumPy \citep{numpy}, Matplotlib \citep{matplotlib}, PyTorch \citep{paszke2019pytorch}, and Guild AI \citep{guild}. We also employed the MNIST dataset \citep{lecun1998gradient} and the CIFAR-10 dataset \citep{krizhevsky2009learning} in our experiments.

\subsection{Computational resources}

Experiments were conducted on an Nvidia RTX 3090 graphics card with 24GB of VRAM. The total computational budget, including prototyping and hyperparameter exploration, was around 100 GPU-hours.

\end{document}